\newcommand{\prg}{P}
\newcommand{\vars}[1]{\mathcal{V}(#1)}
\newcommand{\preds}[1]{\mathcal{P}(#1)}
\newcommand{\heads}[1]{\mathcal{H}(#1)}
\newcommand{\rules}[1]{\mathcal{R}(#1)}
\newcommand{\weak}[1]{\mathcal{W}(#1)}
\newcommand{\HU}{ HU_{\prg}}
\newcommand{\HB}{ HB_{\prg}}
\newcommand{\AS}[1]{AS({#1})}
\newcommand{\OptAS}[1]{OptAS({#1})}
\newcommand{\ASPQ}{ASP(Q)\xspace}
\newcommand{\ASPQW}{ASP$^{\omega}$(Q)\xspace}
\newcommand{\qprg}{\Pi}
\newcommand{\cprg}{C}
\newcommand{\wprg}{C^w}
\newcommand{\fix}[2]{\mathit{fix_{#1}(#2)}}
\newcommand{\fixedqprg}[2]{\qprg_{#1,#2}}
\newcommand{\weakarr}{\leftarrow_{w}}
\newcommand{\weights}{ws}
\newcommand{\levelCost}[3]{\mathcal{C}({#1},{#2},{#3})}
\newcommand{\dffr}{\mathit{diff}}
\newcommand{\lemmacolone}{
\begin{lemma}[Correctness $col_1(\cdot)$ transformation]\label{lemma:uni-plain}
Let program $\qprg$ be such that $n \geq 2$ and the first two subprograms are plain and uniform, i.e., $\Box_1 = \Box_2$, and $\weak{\prg_1} = \weak{\prg_{2}}=\emptyset$, then $\qprg$ is coherent if and only if $col_1(\Pi) = \Box_1 \prg_1 \cup \prg_{2}\Box_{3} \prg_{3}\ldots\Box_n \prg_n:C$ is coherent.
\end{lemma}
}
\newcommand{\obsOr}{
\begin{obs}[ Trivial model existence]\label{obs:or_prg}
    Let $\prg$ be an ASP program, and $l$ be a fresh literal not appearing in $P$, then the following hold: $\{l\}$ is the unique answer set of $or(\prg,l)\cup \{l\leftarrow\}$; 
    and $\AS{or(\prg,l)\cup\{\leftarrow l\}}=\AS{\prg}$.
\end{obs}
}
\newcommand{\defcoltwo}{
\begin{definition}[Collapse notplain-plain existential subprograms]\label{def:col_2}
{Let $\qprg$ be an \ASPQW program of the form $\exists \prg_1 \exists \prg_2 \ldots \Box_n \prg_n: C$, where $\weak{\prg_1}\neq\emptyset$, $\weak{\prg_i}=\emptyset$, with $1<i\leq n$, and $\Box_i \neq \Box_{i+1}$ with $1<i<n$, then:}

    {\footnotesize
    \[
        col_2(\qprg) = \left
        \{\begin{array}{ll}
            \exists \prg_1 \cup or(\prg_2,unsat) \cup W: C \cup \{\leftarrow unsat\} &  n = 2\\
            \exists \prg_1 \cup or(\prg_2,unsat) \cup W\ \forall \prg_3' : C \cup \{\leftarrow unsat\} &  n = 3\\
            \exists \prg_1 \cup or(\prg_2,unsat) \cup W\ \forall \prg_3'\ \exists \prg_4 \cup \{\leftarrow unsat\}\ldots\Box_n \prg_n: C &  n > 3\\
        \end{array}\right.
    \]}
    
    where $W = \{\{unsat\}\leftarrow\} \cup \{\weakarr unsat\ [1@l_{min}-1]\}$, with $l_{min}$ be the lowest level in $\weak{\prg_1}$ and $unsat$ is a fresh symbol not appearing anywhere else, and $\prg_3' = or(\prg_3,unsat)$.
\end{definition}
}
\newcommand{\lemmaColTwo}{
\begin{lemma}[ Correctness $col_2(\cdot)$ transformation]\label{lemma:collapse_ew_e}
    Let $\qprg$ be an \ASPQW program of the form $\exists \prg_1 \exists \prg_2 \ldots \Box_n \prg_n: C$, where $\weak{\prg_1}\neq\emptyset$, $\weak{\prg_i}=\emptyset$, with $1<i\leq n$, and $\Box_i \neq \Box_{i+1}$ with $1<i<n$. Then $\qprg$ is coherent if and only if $col_2(\qprg)$ is coherent.
\end{lemma}
}
\newcommand{\defColThree}{
\begin{definition}[Collapse notplain-plain universal subprograms]\label{def:col_3}
    { Let $\qprg$ be an \ASPQW program of the form $\forall \prg_1 \forall \prg_2 \ldots \Box_n \prg_n: C$, where $\weak{\prg_1}\neq\emptyset$, $\weak{\prg_i}=\emptyset$, with $1<i\leq n$, and $\Box_i \neq \Box_{i+1}$ with $1<i<n$, then:}
    
    {\footnotesize
    \[
        col_3(\qprg) = \left
        \{\begin{array}{ll}
            \forall \prg_1 \cup or(\prg_2,unsat) \cup W: { or(C,unsat)} &  n = 2\\
            \forall \prg_1 \cup or(\prg_2,unsat) \cup W\ \exists \prg_3' : { or(C,unsat)} &  n = 3\\
            \forall \prg_1 \cup or(\prg_2,unsat) \cup W\ \exists \prg_3'\ \forall \prg_4 \cup \{\leftarrow unsat\}\ldots\Box_n \prg_n: C &  n > 3\\
        \end{array}\right.
    \]}
    
    where $W = \{\{unsat\}\leftarrow\} \cup \{\weakarr unsat\ [1@l_{min}-1]\}$, with $l_{min}$ be the lowest level in $\weak{\prg_1}$ and $unsat$ is a fresh symbol not appearing anywhere else, and $\prg_3' = or(\prg_3,unsat)$.
\end{definition}
}
\newcommand{\lemmaColThree}{
\begin{lemma}[Correctness $col_3(\cdot)$ transformation]
    Let $\qprg$ be an \ASPQW program of the form $\forall \prg_1 \forall \prg_2 \ldots \Box_n \prg_n: C$, where $\weak{\prg_1}\neq\emptyset$, $\weak{\prg_i}=\emptyset$, with $1<i\leq n$, and $\Box_i \neq \Box_{i+1}$ with $1<i<n$. Then $\qprg$ is coherent if and only if $col_3(\qprg)$ is coherent.
\end{lemma}
}
\newcommand{\defTranslateWeak}{
\begin{definition}[Transform weak constraints]\label{def:weak_rule} 
    Let $\prg$ be an ASP program with weak constraints, then 
    
    {\footnotesize
    
    \[
        check(P) = \left \{ \hspace*{-0.2cm} 
            \begin{array}{rlr} 
                v_c(w,l,T) \leftarrow & \near b_1,\ldots,b_m  \hspace*{3.6cm} \forall c:\ \ \weakarr b_1,\ldots,b_m [w@l,T] \in P\\
                cl_{\prg}(C,L) \leftarrow & \near level(L), C = \#sum\{w_{c_1};\ldots;w_{c_n}\} & \\
                clone^o(v_c(w,l,T) \leftarrow & \near b_1,\ldots,b_m)  \hfill \forall c:\ \ \weakarr b_1,\ldots,b_m [w@l,T] \in P\\
                clone^o(cl_{\prg}(C,L) \leftarrow & \near level(L), C = \#sum\{w_{c_1};\ldots;w_{c_n}\}) & \\
                \dffr(L) \leftarrow & \near cl_{\prg}(C1,L),  cl_{\prg}^{o}(C2,L), C1\neq C2 &\\
                 hasHigher(L) \leftarrow & \near \dffr(L), \dffr(L1), L<L1 &\\
                 higest(L) \leftarrow & \near \dffr(L),\naf hasHigher(L) &\\
                 dom_{\prg} \leftarrow & \near higest(L), cl_{\prg}(C1,L), cl^{o}_{\prg}(C2,L), C2<C1&\\
            \end{array} \right.
    \]
    }
    \noindent where each $w_{c_i}$ is an aggregate element of the form $W,T : v_{c_i}(W,L,T)$.
\end{definition}
}
\newcommand{\defColFour}{
\begin{definition}[Transform existential not-plain subprogram]\label{def:col_4}
    Let $\qprg$ be an existential alternating \ASPQW program such that all subprograms are plain except the first one (i.e. $\weak{\prg_1}\neq\emptyset$, $\weak{\prg_i}=\emptyset${,} $1 < i \leq n$), then
    
    {\footnotesize
    \[
        col_4(\qprg) = \left
        \{\begin{array}{ll}
            \exists \rules{\prg_1}\forall clone^o(\rules{\prg_1})\cup check(\prg_1): \{\leftarrow dom_{\prg_1}\} \cup C &  n = 1\\
            \exists \rules{\prg_1}\forall \prg_2': \{\leftarrow dom_{\prg_1}\} \cup C &  n = 2\\
            \exists \rules{\prg_1}\forall \prg_2'\exists \prg_3\cup\{\leftarrow dom_{\prg_1}\}\ldots\Box_n \prg_n: C &  n \geq 3\\
        \end{array}\right.
    \] \smallskip }
    
    where $\prg_2' = clone^o(\rules{\prg_1})\cup check(\prg_1) \cup or(\prg_2,dom_{\prg_1})$.
\end{definition}
}
\newcommand{\lemmaColFour}{
\begin{lemma}[Correctness $col_4(\cdot)$ transformation]\label{lemma:rewrite_ew}
Let $\qprg$ be an existential alternating \ASPQW program such that all subprograms are plain except the first,
then $\qprg$ is coherent if and only if $col_4(\qprg)$ is coherent.
\end{lemma}
}
\newcommand{\defColFive}{
\begin{definition}[Transform universal not-plain subprogram]\label{def:col_5}
    Let $\qprg$ be a universal alternating \ASPQW program such that all subprograms are plain except the first one (i.e. $\weak{\prg_1}\neq\emptyset$, $\weak{\prg_i}=\emptyset$ $1 < i \leq n$), then
    
    {\footnotesize
    \[
        col_5(\qprg) = \left
        \{\begin{array}{ll}
            \forall \rules{\prg_1}\exists clone^o(\rules{\prg_1})\cup check(\prg_1): or(C,dom_{\prg_1}) &  n = 1\\
            \forall \rules{\prg_1}\exists \prg_2':  or(C,dom_{\prg_1}) &  n = 2\\
            \forall \rules{\prg_1}\exists \prg_2'\forall \prg_3\cup\{\leftarrow dom_{\prg_1}\}\ldots\Box_n \prg_n: C &  n \geq 3\\
        \end{array}\right.
    \]\smallskip }
    
    where $\prg_2' = clone^o(\rules{\prg_1})\cup check(\prg_1) \cup or(\prg_2,dom_{\prg_1})$.
\end{definition}
}
\newcommand{\lemmaColFive}{
\begin{lemma}[Correctness $col_5(\cdot)$ transformation]\label{lemma:rewrite_fw}
Let $\qprg$ be a universal alternating \ASPQW program such that all subprograms are plain except the first, then 
 $\qprg$ is coherent if and only if $col_5(\qprg)$ is coherent.
\end{lemma}
}
\newcommand{\thmConvergence}{
\begin{thm}[ \ASPQW to \ASPQ convergence and correctness]\label{thm:convergence}
Given program $\qprg$, Algorithm~\ref{alg:cancel_weak} terminates and returns an alternating \ASPQ program $\qprg'$ that is $\qprg'$ is coherent iff $\qprg$ is coherent, and $nQuant(\qprg') \leq nQuant(\qprg) + 1$.
\end{thm}
}
\newcommand{\thmUpperBound}{
\begin{thm}[Upper bound]\label{thm:general_complexity}
The coherence problem of an \ASPQW program $\qprg$ is in: $(i)$ $\Sigma_{n{+1}}^p$ for existential programs, and $(ii)$ $\Pi_{n{+1}}^p$ for universal programs, where $n=nQuant(\qprg)$.
\end{thm}
}
\newcommand{\thmLowerBound}{
\begin{thm}[Lower bound]\label{thm:hardness_general}
The coherence problem of an \ASPQW program is hard for $(i)$ $\Sigma_{n}^p$ for existential programs, and hard for $(ii)$ $\Pi_{n}^p$ for universal programs{, where $n = nQuant(\qprg)$.}
\end{thm}
}
\newcommand{\corLastPlainSigmaN}{
\begin{corollary} [First completeness result]
The coherence problem of an \ASPQW program where the last subprogram is plain (i.e., $\weak{P_n}=\emptyset$) is $(i)$ $\Sigma_{n}^p$-complete for existential programs, and $(ii)$ $\Pi_{n}^p$-complete for universal programs, {where $n = nQuant(\qprg)$.}
\end{corollary} 
}
\newcommand{\thmEEwSigmaTwo}{
\begin{thm}[ Second completeness results]\label{thm:complexity_e_ew}
    Deciding coherence of uniform existential \ASPQW programs with two quantifiers (i.e. $n=2$) such that $\prg_2$ is not plain is $\Sigma_2^p$-complete.
\end{thm}
}
\newcommand{\propEEOrFFisNPorCONP}{
\begin{prop}[Third completeness results]\label{thm:complexity-plain-uniform}
    Deciding coherence of plain uniform \ASPQW programs with $2$ quantifiers is $(i)$ NP-complete for existential programs; and $(i)$ coNP-complete for universal programs.
\end{prop}
}
\newcommand{\thmWeightAndLevelsIsDelta}{
\begin{thm} [Fourth completeness results]\label{thm:models-delta-general}
{Deciding} whether an atom $a$ belongs to an optimal quantified answer set of a plain alternating existential \ASPQW program with $n$ quantifiers is $\Delta_{n+1}^P$-complete.
\end{thm}
}
\newcommand{\thmNoLevelCostantWeightIsTheta}{
\begin{thm} [Fifth completeness results]\label{thm:models-delta}
{ Deciding} whether an atom $a$ belongs to an optimal quantified answer set of a plain alternating existential \ASPQW program with $n$ quantifiers is $\Theta_{n+1}^P$-complete if there is only one level and all the weights are the same.
\end{thm}
}
\newtheorem{thm}{Theorem}
\newtheorem{corollary}{Corollary}[thm]
\newtheorem{lemma}{Lemma}
\newtheorem{obs}{Observation}
\newtheorem{definition}{Definition}
\newtheorem{prop}{Proposition}
\newtheorem{example}{Example}[section]
\newcommand{\myParagraph}[1]{\smallskip\noindent\textit{#1}\xspace}
\newcommand{\myProofSketch}[1]{\textit{Proof. (Sketch)}\xspace #1\hfill\ensuremath{\Box}\smallskip}
\newcommand{\myProof}[1]{\textit{Proof.}\xspace #1\hfill\ensuremath{\Box}\smallskip}
\def\naf{\ensuremath{\raise.17ex\hbox{\ensuremath{\scriptstyle\mathtt{\sim}}}}\xspace}
\begin{document}

\lefttitle{}

\jnlPage{1}{8}
\jnlDoiYr{2021}
\doival{10.1017/xxxxx}

\title[Quantifying over Optimum Answer Sets]{Quantifying over Optimum Answer Sets%
\thanks{Partially supported by MISE under project EI-TWIN n. F/310168/05/X56 CUP B29J24000680005, and MUR under projects: PNRR FAIR - Spoke 9 - WP 9.1 CUP H23C22000860006, Tech4You CUP H23C22000370006, and PRIN PINPOINT CUP H23C22000280006.}}

\begin{authgrp}
\author{\sn{Giuseppe} \gn{Mazzotta}\hspace{.5cm} \sn{Francesco} \gn{Ricca}\hspace{.5cm}\sn{Mirek} \gn{Truszczynski}
}
\affiliation{\hspace{-.6cm} University of Calabria, Italy \hspace{.3cm} University of Calabria, Italy \hspace{.7cm} University of Kentucky}

\end{authgrp}

\history{\sub{xx xx xxxx;} \rev{xx xx xxxx;} \acc{xx xx xxxx}}
\maketitle
\begin{abstract}

Answer Set Programming with Quantifiers (\ASPQ) has been introduced to provide a natural extension of ASP modeling to problems in the polynomial hierarchy (PH). However, \ASPQ lacks a method for encoding in an elegant and compact way problems requiring a polynomial number of calls to an oracle in $\Sigma_n^p$ (that is, problems in $\Delta_{n+1}^p$). Such problems include, in particular, optimization problems.
In this paper we propose an extension of \ASPQ, in which component programs may contain weak constraints. Weak constraints can be used both for expressing local optimization within quantified component programs and for modeling global optimization criteria. We showcase the modeling capabilities of the new formalism through various application scenarios. Further, we study its computational properties obtaining complexity results and unveiling non-obvious characteristics of \ASPQ programs with weak constraints.

    
\end{abstract}
\section{Introduction}
Answer set programming (ASP)~\citep{DBLP:journals/cacm/BrewkaET11,DBLP:journals/ngc/GelfondL91} has been proposed over two decades ago as a variant of logic programming for modeling and solving search and optimization problems \citep{MarekT1999,DBLP:journals/amai/Niemela99}. Today it is among the most heavily studied declarative programming formalisms with highly effective processing tools and an ever-growing array of applications \citep{DBLP:journals/cacm/BrewkaET11,BrewkaET2016}. 
Focusing on decision problems, the scope of applicability of ASP is that of the class $\Sigma_2^P$ \citep{DBLP:journals/csur/DantsinEGV01}. 
This class includes a vast majority of problems of practical interest. However, many important decision problems belong to higher complexity  classes~\citep{schaefer2002completeness,DBLP:journals/tcs/Stockmeyer76}. 
For this reason, many language extensions have been proposed that expand the expressivity of ASP~\citep{DBLP:journals/tplp/BogaertsJT16,DBLP:journals/tplp/FandinnoLRSS21,DBLP:journals/tplp/AmendolaRT19}.
Among these, Answer Set Programming with Quantifiers (\ASPQ)~\citep{DBLP:journals/tplp/AmendolaRT19} has been recently introduced to offer a natural declarative { means} to model problems in the entire Polynomial Hierarchy (PH).

Roughly speaking, the definition of a problem in $\Sigma_n^P$ can be often reformulated as  ``there is an answer set of a program $P_1$ such that for every answer set of a program $P_2$, $\ldots$ there is an answer set of $P_n$, so that a stratified program with constraint $C$, modeling admissibility of a solution, is coherent,'' (and a similar sentence starting with ``for all answer set of program $P_1$'' can be used to encode problems $\Pi_n^P$). 

Both the original paper~\citep{DBLP:journals/tplp/AmendolaRT19} on \ASPQ, and the subsequent one \citep{DBLP:conf/lpnmr/AmendolaCRT22} presented several examples of problems outside the class $\Sigma^P_2$ that allow natural representations as \ASPQ programs. 
Furthermore, \cite{DBLP:conf/lpnmr/AmendolaCRT22} first, and ~\cite{DBLP:journals/tplp/FaberMR23} later, provided efficient tools for evaluating \ASPQ specifications providing empirical evidence of practical potential of \ASPQ.  

However, \ASPQ lacks a convenient method for encoding in an elegant way preference and optimization problems~\citep{schaefer2002completeness,DBLP:journals/tkde/BuccafurriLR00}.

In this paper, we address this issue by proposing an extension of \ASPQ with \textit{weak constraints} { or \ASPQW, in short.} 
Weak constraints were introduced in ASP by \cite{DBLP:journals/tkde/BuccafurriLR00} to define preferences on answer sets. 
They are today a standard { construct of ASP~\citep{DBLP:journals/tplp/CalimeriFGIKKLM20}}, { used} to model problems { in the class} $\Delta^P_3$ (i.e., the class of problems that can be solved by a polynomial  number of calls to a $\Sigma^P_2$ oracle).
In \ASPQW, weak constraints have dual purposes: expressing local optimization within quantified subprograms and modeling global optimization criteria. Both features increase the modeling efficacy of the language, which we demonstrate through example problems. Further, we investigate the computational properties of \ASPQ programs with weak constraints and obtain complexity results that reveal some non-obvious characteristics of the new language. Among these, the key positive result states that \ASPQW programs with $n$ alternating quantifiers can model problems complete for $\Delta_{n+1}^P$.

\section{Answer Set Programming}\label{sec:ASP}
We now recall Answer Set Programming (ASP)~\citep{DBLP:journals/cacm/BrewkaET11,DBLP:journals/ngc/GelfondL91} and introduce the notation employed in this paper. 

\myParagraph{The Syntax of ASP.}
Variables are strings starting with uppercase letters, and constants are non-negative integers or strings starting with lowercase letters. 
A \textit{term} is either a variable or a constant. 
A \textit{standard atom} is an expression of the form $p(t_1, \ldots, t_n)$, where $p$ is a \textit{predicate} of arity $n$ and $t_1, \ldots, t_n$ are terms.
A standard atom $p(t_1, \ldots, t_n)$ is \textit{ground} if $t_1, \ldots, t_n$ are constants.
A \textit{standard literal} is an atom $p$ or its negation $\naf p$.
%
%
%
An \textit{aggregate element} is a pair $t_1,\ldots,t_n : \mathit{conj}$,
where $t_1,\ldots,t_n$ is a non-empty list of terms, and $\mathit{conj}$ is a non-empty conjunction of standard literals.
An \textit{aggregate atom} is an expression $f\{e_1;\ldots;e_n\} \prec T$, where $f \in \{\#count,\#sum\}$ is an \textit{aggregate function symbol}, $\prec\ \in \{<, \leq, >, \geq, =\}$
is a comparison operator, $T$ is a term called the \textit{guard}, and $e_1,\ldots,e_n$ are aggregate elements.
An \textit{atom} is either a standard atom or an aggregate atom.
A \textit{literal} is an atom (positive literal) or its negation (resp. negative literal).
 The \textit{complement} of a literal $l$ is denoted by $\overline{l}$, and it is $\naf a$, if $l = a$, or $a$, if $l = \naf a$, where $a$ is an atom.
For a set of literals $L$, $L^+$ and $L^{-}$ denote the set of positive and negative literals in $L$, respectively.
%
A \textit{rule} is an expression of the form:
\begin{equation}\label{eq1}
h \leftarrow b_1,\ldots,b_k, \naf b_{k+1}, \ldots, \naf b_m.
\end{equation}
where $m\geq k \geq0$. 
Here $h$ is a standard atom or is empty, and all $b_i$ with $i\in[1,m]$ are atoms.
We call $h$ the \textit{head} and $b_1,\ldots,b_k, \naf b_{k+1}, \ldots, \naf b_m$ the \textit{body} of the rule (\ref{eq1}). If the head is empty, the rule is a \textit{hard constraint}. 
If a rule (\ref{eq1}) has a non-empty head and $m=0$, the rule is a \textit{fact}.
Let $r$ be a rule, $h_r$ denotes the head of $r$, and $B_r = B^+_r \cup B^-_r$ where $B^+_r$ (resp. $B^-_r$) is the set of all positive (resp. negative) literals in the body of $r$.

A \textit{weak constraint}~\citep{DBLP:journals/tkde/BuccafurriLR00} is an expression of the form:
\begin{equation}\label{eq2}
\weakarr b_1,\ldots,b_k, \naf b_{k+1}, \ldots, \naf b_m\ [w@l,T],
\end{equation}
where, $m \geq k\geq 0$, $b_1,\ldots,b_k, b_{k+1}, \ldots, b_m$ are standard atoms, $w$ and $l$ are terms, and $T=t_1,\ldots,t_n$ is a tuple of terms with $n\geq 0$.
Given an expression $\epsilon$ (atom, rule, weak constraint, etc.), $\vars{\epsilon}$ denotes the set of variables appearing in $\epsilon$; $at(\epsilon)$ denotes the set of standard atoms appearing in $\epsilon$; and $\preds{\epsilon}$ denotes the set of predicates appearing in $\epsilon$.
For a rule $r$, the \textit{global variables} of $r$ are all those variables appearing in $h_r$ or in some standard literal in $B_r$ or in the guard of some aggregates.
A rule $r$ is \textit{safe} if its global variables appear at least in one positive standard literal in $B_r$, { and each variable appearing into an aggregate element $e$ either is global or appears in some positive literal of $e$}~\citep{DBLP:books/sp/CeriGT90,DBLP:journals/ai/FaberPL11};
a weak constraint $v$ of the form (\ref{eq2}) is safe if $\vars{B_v^-} \subseteq \vars{B_v^+}$ and $\vars{\{w,l\}} \cup \vars{T} \subseteq \vars{B_v^+}$.
A \textit{program} $\prg$ is a set of safe rules and safe weak constraints. 
Given a program $\prg$, $\rules{\prg}$ and $\weak{\prg}$ denote the set of rules and weak constraints in $\prg$, respectively, and $\heads{\prg}$ denotes the set of atoms appearing as heads of rules in $\prg$.

A choice rule~\citep{DBLP:journals/ai/SimonsNS02} is an expression of the form:
$\{e_1;\ldots;e_k\}\leftarrow l_1,\ldots,l_n,$
where each choice element $e_i$ is of the form $a^i:b^i_1,\ldots,b^i_{m_i}$, where $a^i$ is a standard atom, $m_i \geq 0$, and $b^i_1,\ldots,b^i_{m_i}$ is a conjunction of standard literals.
For simplicity, choice rules can be seen as a shorthand for certain sets of rules. 
In particular, each choice element $e_i$ corresponds to: 
$a^i\leftarrow b^i_1,\ldots,b^i_{m_i},l_1,\ldots,l_n,\ \naf na^i$,
$na^i\leftarrow b^i_1,\ldots,b^i_{m_i},l_1,\ldots,l_n,\ \naf a^i$
where $na^i$ denotes the standard atom obtained from $a^i$ by substituting the predicate of $a$, say $p$, with a fresh predicate $p^{\prime}$ not appearing anywhere else in the program.

\myParagraph{The Semantics of ASP.}\label{sec:semantics}
Assume a program $\prg$ is given. The \textit{Herbrand Universe} is the set of all constants appearing in $\prg$ (or a singleton set consisting of any constant, if no constants appear in $\prg$) and is denoted by $\HU$;
whereas the \textit{Herbrand Base}, that is the set of possible ground standard atoms obtained from predicates in $\prg$ and constants in $\HU$, is denoted by $\HB$.
Moreover, $ground(\prg)$ denotes the set of possible ground rules obtained from rules in $\prg$ by proper variable substitution with constants in $\HU$. 
{An \textit{interpretation} $I \subseteq \HB$ is a set of standard atoms. 
A ground standard literal $l = a$ (resp. $l=\naf a$) is true w.r.t. $I$ if $a \in I$ (resp. $a \notin I$), otherwise it is false. 
A conjunction $conj$ of literals is true w.r.t. $I$ if every literal in $conj$ is true w.r.t. $I$, otherwise it is false. 
Given a ground set of aggregate elements $S = \{e_1;\ldots;e_n\}$, $eval(S,I)$ denotes the set of tuples of the form $(t_1,\ldots,t_m)$ such that there exists an aggregate element $e_i\in S$ of the form $t_1,\ldots,t_m: conj$ and $conj$ is true w.r.t. $I$; $I(S)$, instead, denotes the multi-set $[t_1\mid (t_1,\ldots,t_m) \in eval(S,I)]$. 
A ground aggregate literal of the form $f\{e_1;\ldots;e_n\}\succ t$ (resp. $\naf\ f\{e_1;\ldots;e_n\}\succ t$) is true w.r.t. $I$ if $f(I(\{e_1,\ldots,e_n\}))\succ t$ holds (resp. does not hold); otherwise it is false. 
An interpretation $I$ is a \textit{model} of $\prg$ iff for each rule $r \in ground(\prg)$ either the head of $r$ is true w.r.t. $I$ or the body of $r$ is false w.r.t. $I$. Given an interpretation $I$, $\prg^I$ denotes the \textit{FLP-reduct} (cfr.~\cite{DBLP:journals/ai/FaberPL11}) obtained by removing all those rules in $\prg$ having their body false, and removing negative literals from the body of remaining rules. A model $I$ of $\prg$ is also an \textit{answer set} of $\prg$ if for each $I'\subset I$, $I'$ is not a model of $\prg^I$.}
We write $\AS{\prg}$ for the set of answer sets of $\prg$. 
A program $\prg$ is \textit{coherent} if it has at least one answer set (i.e. $\AS{\prg} \neq \emptyset$); otherwise, $\prg$ is \textit{incoherent}.
For a program $\prg$ and an interpretation $I$, let the set of weak constraint violations be 
$\weights(P,I) = \{(w,l,T) \mid \ \weakarr b_1,\ldots,b_m\ [w@l,T] \in ground(\weak{\prg}),$
$b_1,\ldots,b_m$ are true w.r.t. $I$, $w$ and $l$ are integers and $T$ is a tuple of ground terms$\}$, then the cost function of $\prg$ is %
$\levelCost{\prg}{I}{l} = \Sigma_{(w,l,T) \in \weights(P,I)} w,$ for every integer $l$.
%
%
Given a program $\prg$ and two interpretations $I_1$ and $I_2$, we say that that $I_1$ is \textit{dominated} by $I_2$ if there is an integer $l$ such that $\levelCost{\prg}{I_2}{l}<\levelCost{\prg}{I_1}{l}$ and for all integers $l^{\prime} > l$, $\levelCost{\prg}{I_2}{l^{\prime}} = \levelCost{\prg}{I_1}{l^{\prime}}$. An answer set $M\in \AS{\prg}$ is an \textit{optimal} answer set if it is not dominated by any $M^{\prime} \in \AS{\prg}$. 
Intuitively, optimality amounts to minimizing the weight at the highest possible level, with each level used for tie breaking for the level directly above.
The set $\OptAS{\prg}\subseteq\AS{\prg}$ denotes the set of optimal answer sets of $\prg$.
\section{Quantified Answer Set Programming with Weak Constraints}\label{preliminaries:section:qasp}
In this section, we introduce an extension of Answer Set Programming with Quantifiers (\ASPQ) \citep{DBLP:journals/tplp/AmendolaRT19} that explicitly supports weak constraints~\citep{DBLP:journals/tkde/BuccafurriLR00} for modeling optimization problems. 

It is worth noting that \ASPQ can be used to model problems with model preferences and optimization criteria; however, this comes at the price of non-elegant and somehow redundant modeling. 
For this reason, in analogy to what has been done for ASP, it makes sense to contemplate weak constraints in \ASPQ.

A \textit{quantified ASP program with weak constraints} (\ASPQW program) $\qprg$ is of the form:
\begin{equation}
\Box_1 \prg_1\ \Box_2 \prg_2\ \cdots\ \Box_n \prg_n :  \cprg : \wprg,
\label{eq:aspqwprogram}
\end{equation}
\noindent where, for each $i=1,\ldots,n$, $\Box_i \in \{ \exists^{st}, \forall^{st}\}$, $\prg_i$ is an ASP program possibly with weak constraints, 
$\cprg$ is a (possibly empty) \textit{stratified} program~\citep{DBLP:books/sp/CeriGT90} with constraints, and $\wprg$ is a (possibly empty) set of weak constraints such that $B_{\wprg} \subseteq B_{\prg_1}$. 
The number of quantifiers in $\qprg$ is denoted by $nQuant(\qprg)$. 

As it was in the base language, \ASPQW programs are quantified sequences of subprograms ending with a \textit{constraint program} $\cprg$. Differently from \ASPQ, in \ASPQW weak constraints are allowed in the subprograms $\prg_i$ ($1 \leq i \leq n)$, that is, quantification is over optimal answer sets. Moreover, the \textit{global weak constraints} subprogram $\wprg$ is introduced to specify (global) optimality criteria on quantified answer sets. 

Formally, the \textit{coherence} of \ASPQW\ programs is defined as follows:
\begin{itemize}
\item $\exists^{st} \prg:\cprg:\wprg$ is coherent, if there exists $M\in \OptAS{\prg}$ such that $\cprg \cup \fix{\prg}{M}$ admits an answer set;
\item $\forall^{st} \prg:\cprg:\wprg$ is coherent, if for every $M\in \OptAS{P}$, 
$C\cup \fix{\prg}{M}$ admits an answer set;
\item $\exists^{st} \prg\ \qprg$ is coherent, if there exists $M\in \OptAS{P}$ such 
that $\fixedqprg{\prg}{M}$ is coherent;
\item $\forall^{st} \prg\ \qprg$ is coherent, if for every $M\in \OptAS{\prg}$, $\fixedqprg{\prg}{M}$
is coherent.
\end{itemize}
{
where $\fix{\prg}{M}$ denotes the set of facts and constraints $\{ a \mid a\in M \cap \HB\} \cup \{ \leftarrow a \mid a\in \HB \setminus M\}$,
and $\fixedqprg{\prg}{M}$ denotes the \ASPQW\ program of the form~(\ref{eq:aspqwprogram}), where $P_1$ is 
replaced by $P_1\cup \fix{\prg}{M}$, that is, 
$\fixedqprg{\prg}{M} =\Box_1 (P_1\cup \fix{\prg}{M})\ \Box_2 P_2\  \cdots \Box_n P_n :  C:\wprg.$
}

For an existential \ASPQW\ program $\qprg$, $M \in \OptAS{\prg_1}$ is a \textit{quantified answer set} of $\qprg$, if $((\Box_2 \prg_2 \cdots \Box_n \prg_n :  \cprg):C^w)_{\prg_1,M}$ is coherent.
We denote by $QAS(\qprg)$ the set of all quantified answer sets of $\qprg$.

To illustrate the definitions above, let us consider the following \ASPQW\ program
$\Pi = \exists^{st} \prg_1 \forall^{st} \prg_2 \cdots \exists^{st} \prg_{n-1} \forall^{st} \prg_n: \cprg: \wprg$.
``Unwinding'' the definition of coherence yields that 
$\qprg$ is coherent if there exists an \textit{optimal} answer set $M_1$ of $\prg_1'$ 
such that for every \textit{optimal} answer set $M_2$ of $\prg_2'$ 
there exists an \textit{optimal} answer set $M_3$ of $\prg_3'$, and so on until 
there exists an \textit{optimal} answer set $M_{n-1}$ of $\prg_{n-1}'$ 
such that for every \textit{optimal} answer set $M_n$ of $\prg_n'$, 
there exists an answer set of $\cprg \cup \fix{\prg_n'}{M_n}$, 
where $\prg_1'=\prg_1$, and $\prg_i'=\prg_i\cup \fix{\prg_{i-1}'}{M_{i-1}}$ with $i\geq2$. 
Note that, as in \ASPQ, the constraint program $\cprg$ has the role of selecting admissible solutions. Weak constraints could be allowed in $\cprg$, but they would be redundant. 
Indeed, $\cprg$, being stratified with constraints, admits at most one answer set, which would necessarily be optimal. 
%
In contrast, the \textit{local weak constraints} (possibly) occurring in subprograms $P_i$ are essential for determining coherence. 

\begin{example}[Impact of local weak constraints]
Let $\Pi_1 = \exists P_1 \forall P_2 : C $, and $\Pi_2 = \exists Q_1 \forall Q_2 : C$, where $C = \{\leftarrow d,f\}$ and also:
    
    {
        \footnotesize
        \begin{minipage}{.24\textwidth}
            \[
                \prg_1 = \left\{\begin{array}{l}
                     \{a;b\}=1  \leftarrow  \\
                     \{c;d\}=1  \leftarrow  \\
                        \weakarr c\ [1@1]  \\
                \end{array}\right\}
            \]
        \end{minipage}
        \begin{minipage}{.26\textwidth}
            \[
                \prg_2 = \left\{\begin{array}{l}
                     \{e,f\}  \leftarrow \\
                     \leftarrow \naf e,\naf f  \\
                     \weakarr e,f\ [1@1]  \\
                \end{array}\right\}
            \]
        \end{minipage}
        \begin{minipage}{.24\textwidth}
            \[
                Q_1 = \left\{\begin{array}{l}
                     \{a;b\}=1  \leftarrow  \\
                     \{c;d\}=1  \leftarrow  \\
                \end{array}\right\}
            \]
        \end{minipage}
        \begin{minipage}{.2\textwidth}
            \[
                Q_2 = \left\{\begin{array}{l}
                     \{e,f\}  \leftarrow \\
                     \leftarrow \naf e,\naf f  \\
                \end{array}\right\}
            \]
        \end{minipage} \smallskip
    }
    
\noindent Note that, $\Pi_2$ can be obtained from $\Pi_1$ by discarding weak constraints.
First, we observe that $\Pi_1$ is incoherent. Indeed, the optimal answer sets of $P_1$ are $\OptAS{P_1} = \{\{a,d\},\{b,d\}\}$. By applying the definition of coherence, when we consider $M = \{a,d\}$, we have that $\OptAS{P_2^{\prime}} = \{\{e,a,d\},\{f,a,d\} \}$. Once we set  $M^{\prime} = \{f,a,d\}$, the program $C^{\prime}$ is not coherent, and so $M = \{a,d\}$ is not a quantified answer set.
Analogously, when we consider the second answer set of $P_1$, i.e., $M = \{b,d\}$, we have that $\OptAS{P_2^{\prime}} =\{\{e,a,d\},\{f,a,d\} \}$. But, when we set $M^{\prime} = \{f,b,d\}$, the program $C^{\prime}$ is not coherent. Thus, $\Pi_1$ is incoherent.
On the contrary, $\Pi_2$ is coherent. Indeed, $\AS{Q_1} = \{\{a,d\},\{b,d\},\{a,c\},\{b,c\}\} = \OptAS{P_1} \cup \{\{a,c\},\{b,c\}\}$. 
The first two, we know, do not lead to a quantified answer set. But, when we set $M = \{a,c\}$, since $d$ is false, it happens that $C^{\prime}$ is coherent (e.g., when we consider the answer set $\{e,a,c\}$ of $Q_2^{\prime}$). 
Thus, local weak constraints can affect coherence by discarding not optimal candidates.
\end{example}

Global weak constraints in $\wprg$ do not affect coherence, but they serve to define  optimality criteria across quantified answer sets. For this reason, we require that $\wprg$ is defined over the same Herbrand base of $\prg_1$. 
Furthermore, note that $\wprg$ plays no role in universal \ASPQW programs, where coherence is the sole meaningful task.

Given an existential \ASPQW\ program $\qprg$ and two quantified answer sets $Q_1, Q_2 \in QAS(\qprg)$, we say that $Q_1$ is dominated by $Q_2$ if there exists an integer $l$ such that $\levelCost{\prg_1^*}{Q_2}{l}<\levelCost{\prg_1^*}{Q_1}{l}$ and for every integer $l^{\prime} > l$, $\levelCost{\prg_1^*}{Q_2}{l} = \levelCost{\prg_1^*}{Q_1}{l}$, where $\prg_1^* = \prg_1 \cup \wprg$. 
An \textit{optimal quantified answer set} is a quantified answer set $Q \in QAS(\qprg)$ that is not dominated by any $Q^{\prime} \in QAS(\qprg)$. 

\begin{example}[Optimal quantified answer sets]
    Let $\Pi = \exists P_1 \forall P_2 : C : {\wprg}$ be such that:

    {
        \footnotesize
        \begin{minipage}{.22\textwidth}
            \[
                \prg_1 = \left\{
                \{a;b;c\}\leftarrow
                \right\}
            \]
        \end{minipage}
        \begin{minipage}{.22\textwidth}
            \[
                \prg_2 = \left\{
                \begin{array}{l}
                \{a';b';c'\}\leftarrow\\
                \leftarrow a', \naf b'\\
                \leftarrow \naf a', \naf b'\\
                \leftarrow a', \naf c'\\
                \leftarrow \naf a', \naf c'
                \end{array}
                \right\}
            \]
        \end{minipage}
        \begin{minipage}{.22\textwidth}
            \[
                \cprg = \left\{\begin{array}{l}
                     \leftarrow a,\naf a'  \\
                     \leftarrow b,\naf b'  \\
                     \leftarrow c,\naf c'  \\
                \end{array}\right\}
            \]
        \end{minipage}
        \begin{minipage}{.22\textwidth}
            \[
                \wprg = \left\{\begin{array}{l}
                     \leftarrow \naf a\ [1@1,a]  \\
                     \leftarrow \naf b\ [1@1,b]  \\
                     \leftarrow \naf c\ [1@1,c]  \\
                \end{array}\right\}
            \]
        \end{minipage}
        \smallskip
    }
    
Given that $QAS(\Pi) = \{\{\},\{b\},\{c\},\{b,c\}\}$, 
we have that: the cost of $\{\}$ is 3, since it violates all weak constraints in $C^{w}$; \{b\} and \{c\} cost 2, since $\{b\}$ (resp. $\{c\}$) violates the first and the third (resp. second) weak constraint; and, \{b,c\} costs 1, because it only violates the first weak constraint.
Thus, the optimal quantified answer set of $\Pi$ is $\{b,c\}$. 
\end{example}

{

Let $\qprg = \Box_1 \prg_1 \ldots \Box_n P_n$ be an \ASPQW program. $\qprg$ satisfies the stratified definition assumption if for each $1\leq i\leq n$, $\heads{\prg_i}\cap at(\prg_j) = \emptyset$, with $1\leq j < i$. 
In what follows, we assume w.l.o.g. that \ASPQW programs satisfy the \textit{stratified definition assumption}. 

It is worth noting that, standard \ASPQ allows for the specification of preferences and optimization. The basic pattern for obtaining optimal models in \ASPQ is to ``clone'' a program and use an additional quantifier over its answer sets. This allows us to compare pairs of answer sets and, by means of a final constraint program, to select optimal ones. 
For example, assume program $P_1$ models the candidate solutions of a problem and, for the sake of illustration, that we are interested in those minimizing the number of atoms of the form $a(X)$. This desideratum can be modeled directly in standard ASP by adding a weak constraint $\leftarrow_w a(X) [1@1,X]$. 
On the other hand, in \ASPQ we can model it with the program $\exists P_1 \forall P_2 : C$ such that $P_2=clone^s(P_1)$, and $C = \{ \leftarrow \#count\{X: a(X)\}=K, \#count\{X:a^s(X)\}<K\}$. 
Here, we are comparing the answer sets of $P_1$ with all their ``clones'', and keep those that contain a smaller (or equal) number of atoms of the form $a(X)$. This pattern is easy to apply, but it is redundant; also note that checking coherence of an \ASPQ program with two quantifiers is in $\Sigma_2^p$~\citep{DBLP:journals/tplp/AmendolaRT19}, whereas optimal answer set checking of a program with weak constraints is in $\Delta_{2}^p$~\citep{DBLP:journals/tkde/BuccafurriLR00}.
These observations motivate the introduction of weak constraints in \ASPQ, which will be further strengthened in the following sections.
}

\section{Modeling examples}\label{sec:modeling}
We showcase the modeling capabilities of \ASPQW by considering two example scenarios where both global and local weak constraints play a role: the Minmax Clique problem~\citep{cao1995minimax}, and Logic-Based Abduction~\citep{DBLP:journals/jacm/EiterG95}.

\myParagraph{Minmax clique problem.}
Minimax problems are prevalent across numerous research domains.
Here, we focus on the Minmax Clique problem, as defined by \cite{ko1995complexity}, although other minimax variants can be also modeled.

Given a graph $G = \langle V,E \rangle$, let $I$ and $J$ be two finite sets of indices,
and $(A_{i,j})_{i\in I,j\in J}$ a partition of $V$. We write $J^I$ for the
set of all total functions from $I$ to $J$.  
For every total function $f\colon I\rightarrow J$ we denote by $G_f$ the 
subgraph of $G$ induced by $\bigcup_{i\in I} A_{i,f(i)}$. 
The {\sc Minmax Clique} optimization problem is defined as follows: 
Given a graph $G$, sets of indices $I$ and $J$, a partition $(A_{i,j})_{i\in I,j\in J}$, find the integer $k$ ($k \leq |V|)$, such that 
\[
k = \min_{f\in J^I}\;\max\{|Q|: \mbox{$Q$ is a clique of $G_f$}\}.
\]
%
%
%
%
%
The following program of the form $\qprg = \exists \prg_1 \exists \prg_2 : \cprg : \wprg$, encodes the problem:

{\footnotesize
\[
\prg_1 = \left
        \{\begin{array}{rll}
            v(i,j,a)                     & \leftarrow & \forall i \in I, j\in J, a \in A_{i,j}\\
            inI(i)                       & \leftarrow & \forall i \in I\\
            inJ(j)                       & \leftarrow & \forall j \in J\\
            e(x,y)                    & \leftarrow & \forall (x,y) \in E\\
            \{f(i,j):inJ(j)\}=1          & \leftarrow inI(i)    & \\
            \{valK(1);\ldots;valK(\vert V \vert)\} = 1               & \leftarrow &
        \end{array}\right\}
\]
\[
\prg_2 = \left
        \{\begin{array}{rll}
            n_f(X) &\leftarrow & f(I,J),v(I,J,X) \\
            e_f(X,Y) &\leftarrow & n_f(X),n_f(Y),e(X,Y) \\
            \{inClique(X):n_f(X)\}&\leftarrow&\\
            &\leftarrow &inClique(X),inClique(Y),X<Y,\naf e_f(X,Y)\\
            &\weakarr & n_f(X),\naf inClique(X)\ \ [1@1,X]
        \end{array}\right\}
\]
\begin{minipage}{.49\textwidth}
\[
\cprg = \left
        \{\begin{array}{rl}
            \leftarrow & valK(K),\#count\{X:inClique(X)\}\neq K
        \end{array}\right\}
\]    
\end{minipage}
\hfill
\begin{minipage}{.49\textwidth}
\[
\wprg = \left
        \{\begin{array}{l}
            \weakarr val(K)\ \ [K@1]
        \end{array}\right\}
\]
\end{minipage}
\smallskip
}

The input is modeled in program $\prg_1$ as follows: Node partitions are encoded as facts of the form $v(i,j,x)$ denoting that a node $x$ belongs to the partition $a \in A_{i,j}$; facts of the form $inI(i)$ and $inJ(j)$ model indexes $i\in I$ and $j \in J$, respectively; and, the set of edges $E$ is encoded as facts of the form $e(x,y)$ denoting that the edge $(x,y) \in E$. 
The first choice rule in $\prg_1$ guesses one total function $f : I \rightarrow J$, which is encoded by binary predicate $f(i,j)$ denoting that the guessed function maps $i$ to $j$.
The second choice rule guesses one possible value for $k$, modeled by predicate $valK(x)$. 
Thus, there is an answer set of program $\prg_1$ for each total function $f$ and a possible value for $k$.
 
Given an answer set of $\prg_1$, program $\prg_2$ computes the maximum clique of the subgraph of $G$ induced by $f$, i.e., $G_f$. 
To this end, the first rule computes the nodes of $G_f$ in predicate $n_f(X)$, by joining  predicate $f(I,J)$ and $v(I,J,X)$. The second rule computes the edges of $G_f$ considering the edges of $G$ that connect nodes in $G_f$. 
The largest clique in $G_t$ is computed by a $(i)$ choice rule that guesses a set of nodes (in predicate $inClique$), $(ii)$ a constraint requiring that nodes are mutually connected, and $(iii)$ a weak constraint that minimizes the number of nodes that are not part of the clique.
At this point, the program $\cprg$ verifies that the size of the largest clique in the answer set of $\prg_2$ is exactly the value for $k$ in the current answer set of $\prg_1$. Thus, each quantified answer set of $\qprg$ models a function $f$, such that the largest clique of induced graph $G_f$ has size $k$. Now, the global weak constraints in $\wprg$ prefer the ones that give the smallest value of $k$. 

The decision version of this problem is $\Pi_2^p$-complete~\citep{ko1995complexity}. Thus, a solution to the {\sc Minmax Clique} can be computed by a logarithmic number of calls to an oracle in $\Pi_2^p$, so the problem belongs to $\Theta_3^P$~\citep{DBLP:journals/siamcomp/Wagner90}. It is (somehow) surprising that we could write a natural encoding without alternating quantifiers (indeed, $\qprg$ features two \textit{existential} quantifiers). This phenomenon is more general. We will return to it
in Section~\ref{sec:complexity}.


\myParagraph{Logic-Based Abduction}
Abduction plays a prominent role in Artificial Intelligence 
as an essential common-sense reasoning mechanism
~\citep{DBLP:journals/ai/Morgan71,DBLP:conf/ijcai/Pople73}. 

In this paper we focus on the Propositional Abduction Problem (PAP) \citep{DBLP:journals/jacm/EiterG95}. 
The PAP is defined as a tuple of the form $\mathcal{A}=\langle V,T,H,M \rangle$, where $V$ is a set of variables, $T$ is a consistent propositional logic theory over variables in $V$, $H \subseteq V$ is a set of hypotheses, and $M \subseteq V$ is a set of manifestations.
A \textit{solution} to the PAP problem $\cal{A}$ is a set $S\subseteq H$ such that $T\cup S$ is consistent and $T \cup S \vDash M$. 
Solutions to $\cal{A}$, denoted by $sol(\cal{A})$, can be ordered by means of some preference relation $<$. The set of optimal solutions to $\cal{A}$ is defined as $sol_{<}(\cal{A}) =$ $\{S \in sol(\cal{A}) \mid \nexists$ $S'\in sol(\cal{A}) \textit{ such that } \vert S'\vert < \vert S\vert\}$. 
A hypothesis $h \in H$ is \textit{relevant} if $h$ appears at least in one solution $S \in sol_{<}(\cal{A})$. 
%
The main reasoning tasks for PAP are beyond NP~\citep{DBLP:journals/jacm/EiterG95}.

In the following, we assume w.l.o.g. that the theory $T$ is a boolean 3-CNF formula over variables in $V$. 
Recall that, a 3-CNF formula is a conjunction of clauses $C_1 \wedge \ldots \wedge C_n$, where each clause is of the form $C_i = l_i^1 \vee l_i^2 \vee l_i^3$, and each literal $l_i^j$ (with $1 \leq j \leq 3$) is either a variable $a \in V$ or its (classical) negation $\neg a$.

Given a PAP problem $\mathcal{A}=\langle V,T,H,M \rangle$ we aim at computing a solution $S \in sol_{<}(\cal{A})$.
To this end, we use an \ASPQW program of the form $\exists \prg_1 \forall \prg_2: \cprg: \wprg$, where:

{
\footnotesize
\[
\prg_1 = \left
        \{\begin{array}{rlll}
            v(x)            &\leftarrow & &\forall\ x \in V \\
            lit(C_i,a,t)    &\leftarrow & &\forall\ a \in V \mid a \in C_i\\
            lit(C_i,a,f)    &\leftarrow & &\forall\ a \in V \mid \naf a \in C_i \\
            h(x)            &\leftarrow & &\forall\ x \in H \\
            m(x)            &\leftarrow & &\forall\ x \in M \\
            cl(X)           &\leftarrow &  lit(X,\_,\_) &\\
            \{s(X):h(X)\}&\leftarrow & &\\
            \{tau(X,t);tau(X,f)\}=1 &\leftarrow& v(X)& \\
            satCl(C) &\leftarrow& lit(C,A,V), tau(A,V)& \\
            &\leftarrow& cl(C),\naf satCl(C)& \\
            &\leftarrow& s(X), tau(X,f)& \\
            
        \end{array}\right\}
\]

\[
\prg_2 = \left
        \{\begin{array}{rll}
            \{tau'(X,t);tau'(X,f)\}=1 &\leftarrow &v(X) \\

            satCl'(C) &\leftarrow &lit(C,A,V), tau'(A,V) \\
            unsatTS'  &\leftarrow &cl(C),\naf satCl'(C) \\
            unsatTS'  &\leftarrow &s(X),\naf tau'(X,f) \\
        \end{array}\right\}
\]
\begin{minipage}{.49\textwidth}
\[
\cprg = \left
        \{\begin{array}{c}
            \leftarrow \naf unsatTS', m(X), tau'(X,f)
        \end{array}\right\}
\]    
\end{minipage}
\hfill
\begin{minipage}{.49\textwidth}

\[
\wprg = \left
        \{\begin{array}{l}
            \weakarr s(X) \ [1@1,X]
        \end{array}\right\}
\]    
\end{minipage}
\smallskip
}

The aim of $\prg_1$ is to compute a candidate solution $S \subseteq H$ such that $T \cup S$ is consistent; $\prg_2$ and $\cprg$ ensure that $T \cup S \vDash M$, and $\wprg$ ensures that $S$ is cardinality minimal. 
More in detail, in program $P_1$, the variables $V$, hypothesis $H$, and manifestations $M$, are encoded by means of facts of the unary predicates $v$, $h$, and $m$, respectively. The formula $T$ is encoded by facts of the form $lit(C,x,t)$ (resp. $lit(C,x,f)$) denoting that { a} variable $x$ occurs in a positive (resp. negative) literal in clause $C$. 
Then, to ease the presentation, we compute in a unary predicate $cl$ the set of clauses.
The first choice rule guesses a solution (a subset of $H$), and the last five rules verify the existence of a truth assignment $\tau$ for variables in $V$, encoded with atoms of the form $tau(x,t)$ (resp. $tau(x,f)$) denoting { that} a variable $x$ is true (resp. false), such that $unsatTS$ is not derived (last constraint). Note that, $unsatTS$ is derived either if a clause is not satisfied or if a hypothesis is not part of the assignment. Thus, the assignment $\tau$ satisfies $T \cup S$, i.e., $T \cup S$ is consistent.
It follows that the answer sets of $\prg_1$ correspond to candidate solutions $S \subseteq H$ such that $T \cup S$ is consistent. 
Given a candidate solution, program $\prg_2$ has one answer set for each truth assignment $\tau'$ that satisfies $T \cup S$, and the program $C$ checks that all such $\tau'$ satisfy also the manifestations in $M$. Thus, every $M \in QAS(\qprg)$ encodes a solution $S \in sol(\cal{A})$. 
The weak constraint in $\wprg$ ensures we single out cardinality minimal solutions by minimizing the extension of predicate $s$.
%
Finally,
let $h$ be a hypothesis, we aim at checking that $h$ is relevant, i.e., $h \in S$ s.t. $S \in sol_{<}(\cal A)$. We solve this task by taking the program $\qprg$ above that computes an optimal solution and adding to $\wprg$ an additional (ground) weak constraint, namely $\weakarr \naf s(h)\ \ [1@0]$. Intuitively, optimal solutions not containing $h$ violate the weak constraint, so if any optimal answer set contains $s(h)$ then $h$ is relevant.


\myParagraph{Remark.}
Checking that a solution to a PAP is minimal belongs to $\Pi_2^p$~\citep{DBLP:journals/jacm/EiterG95}, so the task we have considered so far is complete for $\Theta^P_3$~\citep{DBLP:journals/siamcomp/Wagner90}.
The programs above feature only two quantifiers, whereas alternative encodings in \ASPQ (i.e., without weak constraints) would have required more.
Moreover, we observe that the programs above are rather natural renderings of the definition of the problems that showcase the benefit of modeling optimization in subprograms and at the global level.

\section{Rewriting into plain \ASPQ}\label{sec:rewrite_to_aspq}

In this section, we describe a mapping that transforms an \ASPQW program $\qprg$ into a plain (i.e., without weak constraints) quantifier-alternating \ASPQ program $\qprg'$ that is coherent iff $\qprg$ is coherent.  
This transformation is crucial for enabling the study of the complexity of the primary reasoning tasks of \ASPQW. 
Additionally, it could be applied in an implementation that extends current solvers such as that by Faber et al. (\citeyear{DBLP:journals/tplp/FaberMR23}).

The transformation works by calling a number of intermediate rewritings until none of them can be applied anymore. They
$(i)$ absorb consecutive quantifiers of the same kind; and,
$(ii)$ eliminate weak constraints from a subprogram by encoding the optimality check in the subsequent subprograms.
We first introduce some useful definitions. 
Given program $\qprg$ of the form~{(\ref{eq:aspqwprogram})} we say that two consecutive subprograms $P_i$ and $P_{i+1}$ are \textit{alternating} if $\Box_i \neq \Box_{i+1}$, and are \textit{uniform} otherwise. 
A program $\qprg$ is \textit{quantifier-alternating} if $\Box_i \neq \Box_{i+1}$ for $1 \leq i < n$. 
A subprogram $P_i$ is \textit{plain} if it contains no weak constraint $\weak{P_i} = \emptyset$, and $\qprg$ is \textit{plain} if both all $P_i$ are plain, and $\wprg = \emptyset$. In the following, we assume that $\qprg$ is an \ASPQW program of the form (\ref{eq:aspqwprogram}).


\myParagraph{Rewriting uniform plain subprograms.}
Two plain uniform subprograms can be absorbed in a single equi-coherent subprogram by the transformation $col_1(\cdot)$ defined as follows.

\lemmacolone

Intuitively, if the first two subprograms of $\qprg$ are uniform and plain then $\qprg$ can be reformulated into an equi-coherent {(i.e. $\qprg$ is coherent iff $col_1(\qprg)$ is coherent)} program with one fewer quantifier.

\myParagraph{Rewriting uniform notplain-plain subprograms.}
Next transformations apply to pairs of uniform subprograms $P_1,P_2$ such that $P_1$ is not plain and $P_2$ is plain.
To this end, we first define the $or(\cdot,\cdot)$ transformation. 
    Let $\prg$ be an ASP program, and $l$ be a fresh atom not appearing in $P$, then $or(P,l) = \{H_r\leftarrow B_r,\naf l \mid r \in P\}$.

\obsOr
Intuitively, if the fact $l\leftarrow$ is added to $or(\prg,l)$ then the interpretation $I=\{l\}$ trivially satisfies all the rules and is minimal, thus it is an answer set. On the other hand, if we add the constraint $\leftarrow l$, requiring that $l$ is false in any answer set, then the resulting program behaves precisely as $\prg$ since literal $\naf l$ is trivially true in all the rule bodies.

We are now ready to introduce the next rewriting function $col_2(\cdot)$. {This transformation allows to absorbe a plain existential subprogram into a non plain existential one, thus reducing by one the number of quantifiers of the input \ASPQW program.} 

\defcoltwo

    

\lemmaColTwo

A similar procedure is introduced for the universal case.

\defColThree
\lemmaColThree


Roughly, if the first two subprograms of $\qprg$ are uniform, $P_1$ is not plain, $P_2$ is plain, and the remainder of the program is alternating, then $\qprg$ can be reformulated into an equi-coherent program with one fewer quantifier.

\myParagraph{Rewrite subprograms with weak constraints.}
The next transformations have the role of eliminating weak constraints from a subprogram by encoding the optimality check in the subsequent subprograms.
To this end, we define the $check(\cdot)$ transformation that is useful for simulating the cost comparison of two answer sets of an ASP program $P$.

First, let $\epsilon$ be an ASP expression and $s$ an alphanumeric string.  We define $clone^s(\epsilon)$ as the expression obtained by substituting all occurrences of each predicate $p$ in $\epsilon$ with $p^s$ which is a fresh predicate $p^s$ of the same arity.

\newcommand{\near}{\hspace*{-0.3cm}}

\defTranslateWeak
    
%

Thus, the first two rules compute in predicate $cl_{\prg}$ the cost of an answer set of $P$ w.r.t. his weak constraints, and the following two rules do the same for $clone^o(P)$.
Then, the last four rules derive $dom_{\prg}$ for each answer set of $P$ that is dominated by $clone^o(P)$. 
%

{ We now introduce how to translate away weak constraints from a subprogram.}
\medskip\medskip
\defColFour
\lemmaColFour
    
    
Intuitively, for a pair $M_1,M_2 \in \AS{\prg_1}$, $M_1$ is dominated by $M_2$ if and only if $check(\prg)\cup\fix{\prg}{M_1}\cup clone^o(\fix{\prg}{M_2})$ admits an answer set $M$ such that $dom_{\prg} \in M$. Thus, coherence is preserved since $dom_{\prg_1}$ discards not optimal candidates such as $M_1$.

{ A similar procedure can be defined for universal subprogram.}

\defColFive
\lemmaColFive
    
    

\myParagraph{Translate \ASPQW to \ASPQ.}
\begin{algorithm}[t!]\scriptsize
\caption{Rewriting from \ASPQW to \ASPQ}
    \label{alg:cancel_weak}
    \SetKwInOut{Input}{Input}
    \SetKwInOut{Output}{Output}
    \SetKwRepeat{Do}{do}{while}
    \Input{An \ASPQW program $\Pi$}
    \Output{A quantifier-alternating \ASPQ program}
    \Begin{
        $s$ := $0$; ~~ $\qprg_0$ := $\qprg$\\
        \Do{$stop \neq \top$}{
            $stop$ := $\top$\\
            \ForAll{$ProgramType \in [1,5]$}{
                Let $i \in [1,n]$ be the largest index such that $\qprg_s^{>i}$ is of the type $ProgramType$ \\ 
                \If{ $i \neq \bot$ }{ 
                    $\qprg_{s+1}$ := $replace(\qprg_s,i,col_{ProgramType}(\qprg_s^{\geq i}))$\\
                    $s$ := $s+1$; \\
                    $stop$ := $\bot$\\
                    $break$  \hfill // go to line 12
                }
            }
        }
        \Return{removeGlobal($\qprg_s$)}
    }
\end{algorithm}
Algorithm~\ref{alg:cancel_weak} defines a procedure for rewriting an \ASPQW program $\qprg$ into an \ASPQ program $\qprg'$, made of at most $n+1$ alternating quantifiers, such that $\qprg$ is coherent if and only if $\qprg'$ is coherent. 
In Algorithm~\ref{alg:cancel_weak}, we make use of some (sub)procedures and dedicated notation.
In detail, for a program $\qprg$ of the form (\ref{eq:aspqwprogram}), $\qprg^{\geq i}$ denotes the \textit{i-th suffix program} $\Box_i \prg_i \ldots \Box_n \prg_n : C$, with $1\leq i \leq n$ (i.e., the one obtained from $\qprg$ removing the first $i-1$ quantifiers and subprograms). 
Moreover, the procedure $removeGlobal(\qprg)$ builds an \ASPQ program from a plain one in input (roughly, it removes the global constraint program $\wprg$).
Given two programs $\qprg_1$ and $\qprg_2$, $replace(\qprg_1,i,\qprg_2)$ returns the \ASPQW program obtained from $\qprg_1$ by replacing program $\qprg_1^{\geq i}$ by $\qprg_2$, 
for example $replace(\exists P_1 \forall P2 \exists P_3 : C,\ 2,\ \exists P_4 :C)$ returns $\exists P_1 \exists P_4 :C$. 

In order to obtain a quantifier alternating \ASPQ program from the input $\qprg$, Algorithm~\ref{alg:cancel_weak} generates a sequence of programs by applying at each step one of the $col_{T}$ transformations.  With a little abuse of notation, we write that a program is of type $T$ ($T \in [1,5]$) if it satisfies the conditions for applying the rewriting $col_{T}$ defined above (cfr., Lemmas~\ref{lemma:uni-plain}-\ref{lemma:rewrite_fw}). For example, when type $T=1$ we check that the first two subprograms of $\qprg$ are plain and uniform so that $col_{1}$ can be applied to program $\qprg$.
In detail, at each iteration $s$, the innermost suffix program that is of current type $T$ is identified, say $\qprg_s^{\geq i}$. Then the next program $\Pi_{s+1}$ is built by replacing $\qprg_s^{>i}$ by $col_{T}(\qprg_s^{>i})$. Algorithm~\ref{alg:cancel_weak} terminates when no transformation can be applied, and returns the program $removeGlobal(\qprg_s)$.

\thmConvergence
{
Intuitively, the proof follows by observing that Algorithm~\ref{alg:cancel_weak} repeatedly simplifies the input by applying $col_T(\cdot)$ procedures ($T\in[1,5])$ until none can be applied. This condition happens when the resulting $\qprg'$ is plain alternating. Equi-coherence follows from Lemmas~\ref{lemma:uni-plain}-\ref{lemma:rewrite_fw}. Unless the innermost subprogram of $\qprg$ is not plain, no additional quantifier is added by  Algorithm~\ref{alg:cancel_weak}, so $nQuant(\qprg') \leq nQuant(\qprg) + 1$, hence the proof follows.

\begin{proof}
At each step $s$, Algorithm~\ref{alg:cancel_weak} searches for the innermost suffix subprogram  $\qprg_s^{\geq i}$ such that either $(i)$ $\qprg_s^{\geq i}$ begins with two consecutive quantifiers of the same type (i.e., it is of type 1,2 or 3), or $(ii)$ $\qprg_s^{\geq i}$ begins with a not plain subprogram followed by a quantifier alternating sequence of plain subprograms (i.e., it is of type 4 or 5). 
In case $(i)$, one of the subprocedures $col_1, col_2$, or $col_3$ is applied, which results in the computation of program $\Pi_{s+1}$ having one less pair of uniform subprograms (i.e., $nQuant(\Pi_{s+1}) = nQuant(\Pi_{s})-1$).
In case $(ii)$, one of the subprocedures $col_4, col_5$ is applied, which results in the computation of program $\Pi_{s+1}$ such that its $i$-th subprogram is plain. After applying 
$col_4, col_5$ we have that $nQuant(\Pi_{s+1}) \leq nQuant(\Pi_{s})+1$, indeed if $i=nQuant(\Pi_s)$ one more quantifier subprogram is added. So the algorithm continues until neither condition $(i)$ nor $(ii)$ holds. This happens when $\Pi_{s}$ is a plain quantifier alternating program. 
Note that, unless the innermost subprogram of $\qprg$ is not plain, no additional quantifier is added during the execution of Algorithm~\ref{alg:cancel_weak} (if anything, some may be removed), so $nQuant(\qprg') \leq nQuant(\qprg) + 1$.
\end{proof}

Additionally, it is easy to see that quantified answer set of existential programs can be preserved if only atoms of the first subprogram are made visible.

\begin{corollary}[ Quantified answer set preservation]
    Let $\qprg$ be an existential \ASPQW program of the form~(\ref{eq:aspqwprogram}) and $\qprg'$ be the result of the application of Algorithm~\ref{alg:cancel_weak} on $\qprg$. Then, $M \in QAS(\qprg)$ if and only if there exists $M' \in QAS(\qprg')$ such that $M'\cap HB_{\prg_1} = M$.
\end{corollary}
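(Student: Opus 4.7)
The plan is to prove the corollary by induction on the number $s$ of rewriting steps performed by Algorithm~\ref{alg:cancel_weak}. Denote by $\qprg_0 = \qprg, \qprg_1, \ldots, \qprg_s$ the sequence of intermediate programs produced, so that $\qprg' = removeGlobal(\qprg_s)$. The inductive invariant to carry through the argument is the following: for every $k \leq s$, there is a bijection $\phi_k : QAS(\qprg) \to \{M' \in QAS(\qprg_k) \mid \text{witnessed as below}\}$ such that $\phi_k(M) \cap HB_{\prg_1} = M$, where $\prg_1$ always refers to the first subprogram of the \emph{original} program $\qprg$. Since $removeGlobal$ only strips off $\wprg$, whose atoms are a subset of $HB_{\prg_1}$ and whose removal does not affect the existence of (quantified) answer sets but only their optimality ordering, the invariant transfers from $\qprg_s$ to $\qprg'$ after observing that coherence, not optimality of $\wprg$, is what quantified-answer-set membership records.

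For the inductive step, I would do case analysis on which $col_T$ ($T \in \{1,\ldots,5\}$) is applied at some index $i \in [1,n]$. When $i > 1$, the transformation touches only the suffix $\qprg_k^{\geq i}$ and leaves the first subprogram identical; the coherence of $\fixedqprg{\prg_1}{M}$ for the rewritten tail is preserved by the appropriate Lemma from \ref{lemma:uni-plain}--\ref{lemma:rewrite_fw}, and since no new symbol is introduced in $\prg_1$, the restriction $\phi_{k+1}(M)\cap HB_{\prg_1}$ remains $M$. When $i = 1$, the first subprogram itself is rewritten; here I need to examine each $col_T$ separately and verify that every auxiliary predicate introduced (namely $unsat$, the clone predicates $p^o$, and the predicates $v_c$, $cl_{\prg}$, $\dffr$, $hasHigher$, $highest$, $dom_{\prg}$ from $check(\cdot)$) is fresh by construction and therefore disjoint from $HB_{\prg_1}$. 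Consequently those atoms are erased by the intersection $M' \cap HB_{\prg_1}$, and the restriction coincides with the corresponding optimal answer set of $\prg_1$.

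The main obstacle will be the case of $col_4$ applied at $i = 1$, which is the only transformation that reshapes $\prg_1$ in a substantive way by stripping its weak constraints and relocating the optimality check into a new universal subprogram. For this case I would argue two points: (a) the answer sets of $\rules{\prg_1}$ coincide set-theoretically with the answer sets of $\prg_1$ (removing weak constraints changes only their cost, not which interpretations are answer sets); and (b) an answer set $M$ of $\rules{\prg_1}$ corresponds to an \emph{optimal} answer set of $\prg_1$ exactly when the fresh universal block $\forall\, clone^o(\rules{\prg_1})\cup check(\prg_1)$ together with the constraint $\leftarrow dom_{\prg_1}$ is coherent when the first subprogram is fixed to $M$. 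Point (b) is precisely the content of the correctness Lemma~\ref{lemma:rewrite_ew} for $col_4$, so the witnessing $M'$ extends $M$ only by atoms over fresh predicates, giving $\phi_{k+1}(M) \cap HB_{\prg_1} = M$. An analogous bookkeeping for $col_1$--$col_3$ is simpler because, under the stratified definition assumption, any atom added by merging $\prg_1$ and $\prg_2$ lies in $HB_{\prg_2}\setminus HB_{\prg_1}$ and is thus filtered by the intersection.

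Composing the bijections $\phi_0, \phi_1, \ldots, \phi_s$ step by step yields the claimed correspondence between $QAS(\qprg)$ and $QAS(\qprg')$, completing the induction.
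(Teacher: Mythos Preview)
Your argument is correct and follows the same line as the paper's (very brief) justification, which simply observes that the only rewritings touching $\prg_1$ are collapse operations adding fresh atoms, so that projecting onto $HB_{\prg_1}$ recovers the original quantified answer sets by Theorem~\ref{thm:convergence}. Two cosmetic points: the correspondence is not a bijection in general (after $col_1$ or $col_2$ at $i=1$ a single $M\in QAS(\qprg)$ may have several extensions $M'\in QAS(\qprg_{k+1})$, one per answer set of $\prg_2\cup\fix{\prg_1}{M}$), and your closing phrase about ``composing the bijections $\phi_0,\ldots,\phi_s$'' does not typecheck since each $\phi_k$ already has domain $QAS(\qprg)$---what you really mean is that the invariant is maintained at every step and in particular at $k=s$.
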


The Corollary above follows (straightforwardly) from Theorem~\ref{thm:convergence} because the only cases in which the first subprogram $\prg_1$ of $\qprg$ undergoes a modification during the rewriting is through a collapse operation, which, by definition, does not ``filter'' out any answer sets of the modified program. Since coherence is preserved by Theorem~\ref{thm:convergence}, a quantified answer set of $\qprg$ can be obtained from a quantified answer set of $\qprg'$ by projecting out atoms that are not in $HB_{\prg_1}$ (i.e., those not in the ``original'' $\prg_1$).

}


\section{Complexity issues}\label{sec:complexity}
{ In this section, we investigate the complexity of problems related to \ASPQW programs. We first study the complexity of the coherence problem. For that problem, global constraints can be ignored. Interestingly, the presence of local constraints leads to some unexpected phenomena. Next, we study the complexity of problems concerning membership of atoms in optimal answer sets. For this study, we restrict attention to existential programs with only global constraints.}

\thmUpperBound
\myProof{Let $\qprg'$ be the result of applying Algorithm~\ref{alg:cancel_weak} to $\qprg$. { Then,}
$\qprg'$ is a quantifier-alternating plain program with at most $n=nQuant(\qprg)+1$ quantifiers that is coherent iff $\qprg$ is coherent (Theorem~\ref{thm:convergence}). 
Thesis follows { from Theorem 3 in the paper by}~\cite{DBLP:journals/tplp/AmendolaRT19}.}

\thmLowerBound
The { result} above follows trivially from the observation that any quantifier-alternating \ASPQ program with $n$ quantifiers is a plain \ASPQW program where $\wprg=\emptyset$. 

{The lower and upper bounds offered by the two previous results do not meet in the general case. However, for some classes of \ASPQW programs they do, which leads to completeness results.
For instance, note that Algorithm~\ref{alg:cancel_weak} produces a quantifier-alternating plain \ASPQ program with at most $n$ quantifiers when the last subprogram is plain.} 

\corLastPlainSigmaN

{
\myProofSketch{The { assertion} follows from Theorem 3 { in the paper by}~\cite{DBLP:journals/tplp/AmendolaRT19} { and from properties of Algorithm~\ref{alg:cancel_weak}}.}


}

Note that, in plain \ASPQ (as well as in related formalisms { such as those considered by ~\cite{DBLP:journals/tcs/Stockmeyer76} and \cite{DBLP:journals/tplp/FandinnoLRSS21}}), the complexity of coherence correlates directly with the number of quantifier alternations~\citep{DBLP:journals/tplp/AmendolaRT19}.
%
Perhaps somewhat unexpectedly at first glance, it is not the case of \ASPQW. { There, when local constraints are present, one can ``go up'' one level with two consecutive quantifiers of the same kind.} This { phenomenon} is exemplified below.

\thmEEwSigmaTwo
\myProofSketch{{ \ (Membership) By applying Algorithm~\ref{alg:cancel_weak} on a uniform existential \ASPQW programs with two quantifiers where the program $P_2$ is not plain, we obtain an equi-coherent \ASPQ of the form $\exists P_1' \forall P_2':C'$. Thus, the membership to $\Sigma_2^P$ follows from Theorem 3 of~\cite{DBLP:journals/tplp/AmendolaRT19}.} %

\noindent Hardness is proved by a reduction of an existential 2QBF in DNF by adapting the QBF encoding in \ASPQ from Theorem 2 of \cite{DBLP:journals/tplp/AmendolaRT19}. In detail, a weak constraint in $P_2$ simulates the forall quantifier by preferring counterexamples that are later excluded by the final constraint $C$.}

The proof offers insights into this phenomenon, revealing that the second quantifier, governing optimal answer sets, essentially ``hides'' a universal quantifier.
The following result closes the picture for uniform plain programs with two existential quantifiers.

\propEEOrFFisNPorCONP
The result follows trivially from Lemma~\ref{lemma:uni-plain}, once we observe that one application of $col_1$ builds an equi-coherent program with one quantifier. 
%
%

\medskip
{
{ We will now turn our attention to problems involving optimal quantified answer sets.}

Observe that, as for the case of plain ASP, verifying the existence of an optimal quantified answer set has the same complexity as verifying the existence of a quantified answer set. Indeed, if a quantified answer set exists, there is certainly an optimal one. 
Thus, a more interesting task is to verify \textit{whether an atom $a$ belongs to some optimal quantified answer sets}.  (This is important as it supports brave reasoning as well as allows one to compute an optimal quantified answer set, if one exists). 

{ We will now study this problem for \textit{plain} \ASPQW programs with global constraints that seem to be especially relevant to practical applications.}
{ Similarly to what was proved} by \cite{DBLP:journals/tkde/BuccafurriLR00} for ASP, the task in question results in a jump in complexity.
Specifically, it elevates the complexity to being complete for $\Delta_n^P$ in the general case.

\thmWeightAndLevelsIsDelta

\myProofSketch{
    (Hardness) Hardness can be proved by resorting the observations used in the proof by \cite{DBLP:journals/tkde/BuccafurriLR00}. 
    More precisely, let $X_1,\ldots,X_n$ be disjoint sets of propositional variables, and $\Phi$ be a QBF formula of the form $\forall X_2 \exists X_3 \ldots \mathcal{Q} X_n\ \phi$, where each $Q\in\{\exists,\forall\}$, and $\phi$ is a formula over variables in $X_1,\ldots,X_n$ in 3-DNF if $n$ is even, otherwise it is in 3-CNF, and $X_1 = \{x_1,\ldots,x_m\}$. 
    Deciding whether the lexicographically minimum truth assignment $\tau$ of variables in $X_1$, such that $\forall X_2 \exists X_3 \ldots \mathcal{Q} X_n\ \phi_{\tau}$ is satisfied (assuming such $\tau$ exists), satisfies the condition $\tau(x_m)=\top$ is a $\Delta_{n+1}^P$-complete problem~\citep{DBLP:journals/tcs/Krentel92}.
    Such a problem can be encoded as a plain alternating \ASPQW program { $\Pi$ with $n$ quantifiers} such that an { atom $x_m$} appears in some optimal quantified answer set of $\qprg$ if and only if the answer to the problem is ``yes".
    
    (Membership) 
    As observed by~\cite{DBLP:journals/tkde/BuccafurriLR00} and~\cite{DBLP:journals/ai/SimonsNS02}, an optimal solution can be obtained with 
    binary search on the value of maximum possible cost, namely $k$.
    Since $k$ can be exponential in the general case, then an optimal quantified answer set of $\Pi$ can be obtained with a polynomial number of calls to the oracle in $\Sigma_n^P$, with $n=nQuant(\Pi)$. 
    Finally, an extra oracle call checks that the atom $a$ appears in some optimal quantified answer sets. 
}

Another interesting result regards a specific class of plain \ASPQW programs, namely those programs in which there is only one level and all the weights are the same.
In this particular case the complexity lowers to $\Theta_{n+1}^P$. 
Recall that, $\Theta_{n+1}^P$ is the class of problems that can be solved by a logarithmic number of calls to an oracle in $\Sigma_n^P$, that in the literature is also denoted by $\Delta_{n+1}^p[O(log\ m)]$~\citep{DBLP:conf/icalp/Wagner86}.
The next result shows \ASPQW can optimally encode optimization problems in this complexity class~\citep{DBLP:conf/icalp/Wagner86,DBLP:journals/siamcomp/Wagner90}, such as the Propositional Abduction Problem discussed in Section~\ref{sec:modeling}.

\thmNoLevelCostantWeightIsTheta

\begin{proof}
(Hardness) Let a QBF formula $\Phi$ be an expression of the form $\mathcal{Q}_1 X_1 \ldots \mathcal{Q}_n X_n \phi$, where $X_1,\ldots,X_n$ are disjoint sets of propositional variables, $\mathcal{Q}_i \in \{\exists,\forall\}$ for all $1\leq i\leq n$, $\mathcal{Q}_i \neq \mathcal{Q}_{i+1}$ for all $1\leq i < n$, and $\phi$ is a 3-DNF formula over variables in $X_1,X_2,\ldots,X_n$ of the form $D_1 \vee \ldots \vee D_n$, where each conjunct $D_i = l_1^i\wedge l_2^i\wedge l_3^i$, with $1\leq i\leq n$. 
A $k$-existential QBF formula $\Phi$ is a QBF formula where $n = k$ and $\mathcal{Q}_1 = \exists$.

Given a sequence of { $m$} $k$-existential QBF formulas { $\Phi_1,\ldots,\Phi_m$,} with $k$ being even and greater than or equal to $2$, and such that if $\Phi_j$ is unsatisfiable then also $\Phi_{j+1}$ is unsatisfiable, where $1\leq j < m$, deciding whether $v(\Phi_1,\ldots,\Phi_m) = max\{ j \mid 1\leq j \leq m \wedge \Phi_j\ \textit{is satisfiable}\}$ is odd is $\Theta_{k+1}$-complete~\citep{DBLP:journals/tkde/BuccafurriLR00}.

The above problem can be encoded into an \ASPQW program $\qprg$ such that a literal, namely $odd$, appears in some optimal quantified answer set of $\qprg$ if and only if $v(\Phi_1,\ldots,\Phi_m)$ is odd.
For simplicity, we introduce notation { for some sets of rules} that will be used in the construction of $\qprg$.
More precisely, given a QBF formula $\Phi$, $sat(\Phi)$ denotes the set of rules of the form $sat_{\Phi} \leftarrow l_1^i, l_2^i, l_3^i$, where $D_i = l_1^i\wedge l_2^i\wedge l_3^i$ is a conjunct in $\phi$; whereas for a set of variables $X_i = \{x_1^i,\ldots,x_n^i\}$ in $\Phi$, and an atom $a$, $choice(X_i,a)$ denotes the choice rule $\{x_1^i;\ldots;x_n^i\}\leftarrow a$.
We are now ready to construct the program $\qprg$.

First of all, we observe that all the formulas $\Phi_1,\ldots,\Phi_m$ have the same alternation of quantifiers. Thus, there is a one-to-one correspondence between the quantifiers in the QBF formulas and those in $\qprg$.
Let $\qprg$ be of the form $\Box_1\prg_1\Box_2\prg_2\ldots\Box_k \prg_k: C: \wprg$ where $\Box_i = \exists$ if $\mathcal{Q}_i = \exists$ in a formula $\Phi_j$, otherwise $\Box_i=\forall${ . The program} $\prg_1$ is of the form

{
\footnotesize
\[
    \prg_1 = \left\{
    \begin{array}{rllr}
        \{solve(1);\ldots;solve(m)\}=1 & \leftarrow& & \\
        unsolved(i)& \leftarrow & solve(j) & \forall\ j,i \in [1,\ldots,m] s.t.\ i>j\\
        odd & \leftarrow & solve(j) & \forall j \in [1,\ldots,m] s.t. \textit{ j is odd}\\
        choice(X_1^j,solve(j)) & & & \forall 1\leq j \leq m\\
    \end{array}
    \right\},
\]
\medskip
}

\noindent
{ while,}
for each $2\leq i \leq k$, the program $\prg_i$ is of the form
{
\footnotesize
\[
    \prg_i = \left\{
    \begin{array}{lr}
        choice(X_i^j,solve(j)) & \forall 1\leq j \leq m\\
    \end{array}
    \right\},
\]
}

\noindent
where each $X_i^j$ denotes the set of variables appearing in the scope of the $i$-th quantifier of the $j$-th QBF formula $\Phi_j$. { Finally,} the programs $\cprg$ and $\wprg$ are of the form

\begin{minipage}{.49\textwidth}
\footnotesize
\smallskip
\[
    \cprg = \left\{
    \begin{array}{rllr}
        &           & sat(\Phi_j) & \forall 1\leq j \leq m\\
        & \leftarrow& solve(j),\ \naf sat_{\Phi_j}  & \forall 1\leq j \leq m\\
    \end{array}
    \right\}
\]    
\smallskip
\end{minipage}
\hfill
\begin{minipage}{.49\textwidth}
\footnotesize
\[
    \wprg = \left\{
    \begin{array}{lr}
        \weakarr unsolved(i)\ [1@1,i] &\forall 1\leq i \leq m \\
    \end{array}
    \right\}.
\]    
\end{minipage}

Intuitively, the first choice rule in $\prg_1$ is used to guess one QBF formula, say $\Phi_j$, among the $m$ input ones, for which we want to verify the satisfiability. The guessed formula is encoded with the unary predicate $solve${, whereas, all the following formulas $\Phi_i$, with $i>j$, are marked as unsolved by means of the unary predicate $unsolved$.}

Then, $\prg_1$ contains different rules of the form $odd \leftarrow solve(j)$ for each odd index $j$ in $[1,m]$. 
Thus the literal $odd$ is derived whenever a QBF formula $\Phi_j$ in the sequence $\Phi_1,\ldots,\Phi_m$ is selected (i.e. $solve(j)$ is true) and $j$ is odd.
The remaining part of $\prg_1$ shares the same working principle of the following subprograms $\prg_i$, with $i\geq 2$. 
More precisely, for each QBF formula $\Phi_j$ in the sequence $\Phi_1,\ldots,\Phi_m$, they contain a choice rule over the set of variables quantified by the $i$-th quantifier of $\Phi_j$.
Note that the atom $solve(j)$ in the body of these choice rules guarantees that only one gets activated, and so the activated choice rule guesses a truth assignment for the variables in the $i$-th quantifier of $\Phi_j$.
Similarly, the constraint program $C$ contains, for each QBF formula $\Phi_j$ in the sequence $\Phi_1,\ldots,\Phi_m$, $(i)$ a set of rules that derives an atom $sat_{\phi_j}$ whenever the truth assignment guessed by the previous subprograms satisfies $\phi_j$, and $(ii)$ a strong constraint imposing that is not possible that we selected the formula $\Phi_j$ (i.e. $solve(j)$ is true) and $\phi_j$ is violated (i.e. $sat_{\Phi_j}$ is false).  
Thus, there exists a quantified answer set of $\qprg$ if and only if there exists a formula $\Phi_j$ in the sequence $\Phi_1,\ldots,\Phi_m$ such that $\Phi_j$ is satisfiable.
Since the program $\wprg$ contains the set of weak constraints of the form {$\weakarr unsolved(j)\ [1@1,j]$} for each $j \in [1,\ldots,m]$, the cost of each quantified answer set is given by the index $j$ of the selected formula. 
Thus, by minimizing the number of unsolved formulas
we are maximizing the index of the satisfiable formula $\Phi_j$. Thus, an optimal quantified answer set corresponds to a witness of coherence for a formula $\Phi_j$, s.t. for each $\Phi_{j'}$, with $j'>j$, $\Phi_{j'}$ is unsatisfiable.
By construction $odd$ is derived whenever $j$ is odd and so the { hardness} follows. 

(Membership) 
According to Theorem 3 of \cite{DBLP:journals/tplp/AmendolaRT19}, we know that the coherence of an existential plain alternating program with $n$ quantifiers falls within the complexity class $\Sigma_{n}^P$-complete.
By following { an} observation employed in the proofs by \cite{DBLP:journals/tkde/BuccafurriLR00}, { the cost of} an optimal solution can be obtained by 
binary search 
{ that terminates in} a logarithmic, { in the value of the maximum cost,} number of calls to an oracle in $\Sigma_{n}^P$ {that checks whether a quantified answer set with a lower cost with respect to the current estimate of the optimum exists.} { Once the cost of an optimal solution is determined, one more call to the oracle (for an appropriately modified instance), allows one to decide} the existence of an optimal solution containing $a$. Since each weak constraint has the same weight and the same level, then we can consider as the maximum cost the number of weak constraint violations. Thus, the number of oracle calls is at most logarithmic in the size of the problem and 
the membership follows.
\end{proof}

}

\section{Related Work}\label{sec:related}
Disjunctive ASP programs can be used to model problems in the second level of the PH using programming techniques, such as \textit{saturation}~\citep{DBLP:journals/amai/EiterG95,DBLP:journals/csur/DantsinEGV01}, but it is recognized that they are not intuitive. 
As a consequence, many language extensions have been proposed that expand the expressivity of ASP~\citep{DBLP:journals/tplp/BogaertsJT16,DBLP:journals/tplp/FandinnoLRSS21,DBLP:journals/tplp/AmendolaRT19}.
This paper builds on one of these, namely: Answer Set Programming with Quantifiers (\ASPQ) ~\citep{DBLP:journals/tplp/AmendolaRT19}.
\ASPQ extends ASP, allowing for declarative and modular modeling of problems of the entire PH~\citep{DBLP:journals/tplp/AmendolaRT19}. We expand \ASPQ with weak constraints to be able to model combinatorial optimization problems. In Section~\ref{sec:modeling}, we show the efficacy of  \ASPQW in modeling problems that would require cumbersome \ASPQ representations.

The two formalisms most closely related to \ASPQ are the stable-unstable semantics~\citep{DBLP:journals/tplp/BogaertsJT16}, and quantified answer set semantics~\citep{DBLP:journals/tplp/FandinnoLRSS21}. We are not aware of any extension of these that support explicitly weak constraints or alternative optimization constructs.
\cite{DBLP:journals/tplp/AmendolaRT19} and \cite{DBLP:journals/tplp/FandinnoLRSS21} provided an exhaustive comparison among ASP extensions for problems in the PH.

It is worth observing that \ASPQW extends \ASPQ by incorporating weak constraints, a concept originally introduced in ASP for similar purposes~\citep{DBLP:journals/tkde/BuccafurriLR00}.
Clearly, \ASPQW is a strict expansion of ASP, indeed it is easy to see that any ASP program $P$ is equivalent to a program of the form~(\ref{eq:aspqwprogram}) with only one existential quantifier, where $P_1=\rules{P}$, $\wprg = \weak{P}$, $\cprg = \emptyset$. 
Related to our work is also a formalism that has been proposed for handling preferences in ASP, called \textit{asprin}~\citep{DBLP:journals/ai/BrewkaDRS23}. \textit{asprin} is very handy in defining preferences over expected solutions, nonetheless, the complexity of main reasoning tasks in \textit{asprin} is at most $\Sigma_3^P$~\citep{DBLP:journals/ai/BrewkaDRS23}, with optimization tasks belonging at most to $\Delta_3^P$~\citep{DBLP:journals/ai/BrewkaDRS23}; thus, in theory, \ASPQW can be used to model more complex optimization problems (unless P=NP). \vspace{-0.3cm}


\section{Conclusion}\label{sec:conclusion}
{
We proposed an extension of \ASPQ enabling the usage of weak constraints for expressing complex problems in $\Delta_n^p$, called \ASPQW. We demonstrated \ASPQW's modeling capabilities providing suitable encodings for well-known complex optimization problems.
Also, we studied complexity aspects of \ASPQW, establishing upper and lower bounds for the general case, and revealing intriguing completeness results. 
Future work involves tightening the bounds from Theorem \ref{thm:general_complexity} for arbitrary $n$, extending \ASPQW to support subset minimality, 
and design a complexity-aware implementation for \ASPQW based on the translation of Section~\ref{sec:rewrite_to_aspq} and extending the system PyQASP~\citep{DBLP:journals/tplp/FaberMR23}.
}
\bibliographystyle{tlplike}
\bibliography{biblio}


\appendix
\section{Preliminaries on complexity classes}
In this section, we recall some basic definitions of complexity classes that are used to study the complexity of the introduced formalism. For further details about $NP$-completeness and complexity theory we refer the reader to dedicated literature~\citep{DBLP:books/daglib/0072413}.  
We recall that the classes $\Delta_k^P$, $\Sigma_k^P$, and $\Pi_k^P$ of the polynomial time hierarchy (PH)\citep{DBLP:journals/tcs/Stockmeyer76} are defined as follows (rf. \cite{DBLP:books/fm/GareyJ79}):
$$  \Delta_0^P = \Sigma_0^P = \Pi_0^P = P$$
and, for all $k>0$
$$  \Delta_{k+1}^P = P^{\Sigma_k^P},\ \ \Sigma_{k+1}^P = NP^{\Sigma_k^P},\ \ \Pi_{k+1}^P = coNP^{\Sigma_k^P},$$
where, $NP = \Sigma_1^P$, $coNP = \Pi_1^P$, and $\Delta_2=P^{NP}$. 

In general, $P^C$ (resp. $NP^C$) denotes the class of problems that can be solved in polynomial time on a deterministic (resp. nondeterministic) Turing machine with an oracle in the class $C$. 
Note that, the usage of an oracle $O\in C$ for solving a problem $\pi$ is referred to as a subroutine call, during the evaluation of $\pi$, to $O$. The latter is evaluated in a unit of time. 
Among such complexity classes, the classes $\Delta_k^P$, with $k \geq 2$, have been refined by the class $\Delta_k^P[O(\log n)]$ (also called $\Theta_k^P$), where the number of oracle calls is bounded by $O(\log n)$, with $n$ being the size of the input~\citep{DBLP:journals/tcs/Krentel92,DBLP:journals/siamcomp/Wagner90}.

\section{Modeling $\Delta_2^P$ in \ASPQW}
According to the complexity study carried out by \citeN{DBLP:journals/jacm/EiterG95}, given a PAP $\mathcal{A} = \langle V, T, H, M \rangle$ and a set $S \subseteq H$, the task of verifying if $S \in sol(\mathcal{A})$ is in $\Delta_2^P$. In particular, this task can be modeled with an \ASPQW $\qprg$ of the form $\exists \prg : \cprg$, where $\prg$ is not plain. 
More in detail, the program $\cprg$ contains only one constraint, which is $\leftarrow notEntail$ whereas the program $\prg$ is defined as follows: 

{
\footnotesize
\[
\prg = \left
        \{\begin{array}{rlll}
            v(x)                    &\leftarrow &                       &\forall\ x \in V \\
            lit(C_i,a,t)            &\leftarrow &                       &\forall\ a \in V \mid a \in C_i\\
            lit(C_i,a,f)            &\leftarrow &                       &\forall\ a \in V \mid \naf a \in C_i \\
            h(x)                    &\leftarrow &                       &\forall\ x \in H \\
            m(x)                    &\leftarrow &                       &\forall\ x \in M \\
            s(x)                    &\leftarrow &                       &\forall\ x \in S\\
            cl(X)                   &\leftarrow & lit(X,\_,\_)          &\\
            \{tau(X,t);tau(X,f)\}=1 &\leftarrow & v(X)                  & \\
            satCl(C)                &\leftarrow & lit(C,A,V), tau(A,V)  & \\
            unsatTS                 &\leftarrow & cl(C),\naf satCl(C)   & \\
            unsatTS                 &\leftarrow & s(X), tau(X,f)        & \\
                                    &\leftarrow & unsatTS               &\\
        \{tau'(X,t);tau'(X,f)\}=1   &\leftarrow & v(X)                  &\\
            satCl'(C)               &\leftarrow &lit(C,A,V), tau'(A,V)  &\\
            unsatTS'                &\leftarrow &cl(C),\naf satCl'(C)   &\\
            unsatTS'                &\leftarrow &s(X),\naf tau'(X,f)    &\\
            satTS                   &\leftarrow &\naf unsatTS'          &\\
            notEntail               &\leftarrow & satTS, m(X),tau'(X,f) &\\
                                    &\weakarr   & \naf notEntail\ \ [1@1]&
        \end{array}\right\}
\]
}

Intuitively, the program $\prg$ is used to verify that $T \cup S$ is consistent and $T \cup S$ entails the manifestation $M$. Verifying that $T\cup S \vDash M$ requires checking that for every truth assignment $M$ is satisfied whenever $T\cup S$ is satisfied. 
Thus, the main intuition is that, in this case, we can simulate this entailment check by means of weak constraints.
More precisely, an answer set $M \in \AS{\prg}$ contains a pair of truth assignments $\tau$ and $\tau'$ such that $\tau$ guarantee the consistency of $T \cup S$, whereas $\tau'$ either violates the entailment $T \cup S \vDash M$ and so $notEntail \in M$ or satisfies the entailment and so $notEntail \notin M$.
According to the weak constraint in $\prg$, an answer set containing $notEntail$ is preferred to an answer set not containing $notEntail$.
Thus, if there exists an answer set $M \in \OptAS{\prg}$ such that $notEntail \in M$ then $T \cup S \nvDash M$, and so, $S$ is not a solution for $\mathcal{A}$. 
Moreover, since $M$ is optimal then does not exist $M'\in \OptAS{\prg}$ such that $notEntail \notin M'$, and so, $\qprg$ is incoherent. 
Conversely, if there exists an answer set $M \in \OptAS{\prg}$ such that $notEntail \notin M$ then every answer set $M'\in \OptAS{\prg}$ is such that $notEntail \notin M'$. 
This means that $T \cup S \vDash$ holds and so $S$ is a solution for $\mathcal{A}$.
Since $notEntail \notin M$ then the program $\cprg$ is coherent and so, $\qprg$ is coherent.
\section{Stratified definition assumption}
In this section, we demonstrate that we can assume without loss of generality that \ASPQW programs satisfy the \textit{stratified definition assumption}.

\begin{definition}
Let $\qprg = \Box_1 \prg_1 \ldots \Box_n P_n$ be an \ASPQW program. $\qprg$ satisfies the stratified definition assumption if for each $1\leq i\leq n$, $\heads{\prg_i}\cap at(\prg_j) = \emptyset$, with $1\leq j < i$. 
\end{definition}

We demonstrate in the following that any \ASPQW program $\qprg$ can be transformed into a program $\qprg'$ such that $\qprg'$ satisfies the stratified definition assumption and is coherent whenever $\qprg$ is coherent. 

To this end, we recall that for an ASP expression $\epsilon$ and an alphanumeric string $s$, $clone^s(\epsilon)$ is the expression obtained by substituting all occurrences of each predicate $p$ in $\epsilon$ with $p^s$ that is a fresh predicate $p^s$ of the same arity.

It is easy to see that there is a one-to-one correspondence between answer sets of a program $P$ and its clone program $clone^s(P)$.

\begin{prop}\label{prop:clone_as}
    Let $\prg$ be an ASP program, and $s$ be an alphanumeric string. 
    Then $M \in AS(\prg)$ if and only if $clone^s(M) \in AS(clone^s(\prg))$.
\end{prop}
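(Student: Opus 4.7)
The plan is to prove the proposition by reducing it to the observation that $clone^s(\cdot)$ is a bijective renaming of predicate symbols, and that every relevant semantic notion (truth of a literal, satisfaction of a rule, FLP-reduct, subset order on interpretations) commutes with this renaming.

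First, I would set up the correspondence at the syntactic/semantic level. Since $clone^s$ maps each predicate $p$ to a fresh predicate $p^s$ of the same arity and leaves terms untouched, it induces a bijection between ground atoms of $\prg$ and ground atoms of $clone^s(\prg)$. Lifting this bijection to sets of atoms gives a bijection between interpretations: $I \mapsto clone^s(I)$, with inverse obtained by the symmetric renaming. I would then verify, by structural induction on literals, conjunctions and aggregate literals, that for every ground expression $\varphi$ over $\prg$'s signature, $\varphi$ is true w.r.t.\ $I$ if and only if $clone^s(\varphi)$ is true w.r.t.\ $clone^s(I)$. This is immediate for standard literals and propagates through conjunctions and aggregate elements because the renaming does not alter terms or comparison operators, so the multisets $I(S)$ and $clone^s(I)(clone^s(S))$ coincide.

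Next, I would show that $clone^s$ commutes with grounding and the FLP-reduct. For grounding, since $\HU$ depends only on constants (which are preserved), we have $ground(clone^s(\prg)) = clone^s(ground(\prg))$. For the FLP-reduct, by the literal-truth correspondence above, a ground rule $r \in ground(\prg)$ has its body true w.r.t.\ $M$ iff $clone^s(r)$ has its body true w.r.t.\ $clone^s(M)$, so $clone^s(\prg^M) = clone^s(\prg)^{clone^s(M)}$. Combining this with the fact that $M$ is a model of $\prg$ iff $clone^s(M)$ is a model of $clone^s(\prg)$ (direct consequence of the literal-truth correspondence applied to each rule) handles the ``model'' part of the answer-set definition.

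Finally, I would handle minimality. Because $clone^s$ is a bijection on ground atoms, for any two interpretations $M_1, M_2$ we have $M_1 \subset M_2$ iff $clone^s(M_1) \subset clone^s(M_2)$. Hence there exists $M' \subset M$ with $M'$ a model of $\prg^M$ iff there exists $N' \subset clone^s(M)$ with $N'$ a model of $clone^s(\prg)^{clone^s(M)}$ (taking $N' = clone^s(M')$ and vice versa through the inverse renaming). Combining model-preservation with minimality-preservation yields the biconditional.

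I do not expect any real obstacle: the whole argument is bookkeeping around the fact that $clone^s$ is a bijective syntactic renaming. The only point requiring slight care is the aggregate case, where one must observe that $clone^s$ does not alter the tuple-extraction semantics of aggregate elements, so that $eval(S, I)$ and $eval(clone^s(S), clone^s(I))$ produce identical tuple sets and hence identical multisets under the aggregate functions.
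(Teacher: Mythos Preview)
Your proposal is correct and essentially matches the paper's treatment: the paper does not give an explicit proof of this proposition, merely prefacing it with the remark that ``it is easy to see that there is a one-to-one correspondence between answer sets of a program $P$ and its clone program $clone^s(P)$.'' Your argument spells out exactly why this correspondence holds, via the fact that $clone^s$ is a bijective predicate renaming that commutes with truth evaluation, grounding, the FLP-reduct, and the subset order; this is the natural (and only reasonable) way to substantiate the paper's one-line claim.
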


We now introduce the \textit{remap} function, which will be used to modify the signature of a subprogram.

\begin{definition}\label{def:remap}
Let $\prg_1$ and $\prg_2$ be two ASP programs, then $remap(\prg_2,\prg_1)$ is the program $clone^c(\prg_2) \cup \{clone^c(a)\leftarrow a \mid a \in \mathcal{B}_{\prg_1}\} \cup \{\leftarrow clone^c(a), \naf a \mid a \in \mathcal{B}_{\prg_1}\}$
\end{definition}
\begin{obs}\label{obs:fresh_atoms}
    Let $\prg_1$ and $\prg_2$ be two ASP programs, and $P = remap(\prg_2,\prg_1)$, then $\heads{\prg}\cap at(\prg_1)=\emptyset$.
\end{obs}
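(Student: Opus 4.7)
My plan is to prove the observation by a straightforward case analysis on the rules of $P = remap(\prg_2, \prg_1)$, using the freshness of the string $c$ (implicit in the use of $clone^c$) that renames predicates.

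First, I would fix the convention that $c$ is a fresh alphanumeric string, so for every predicate symbol $p$ occurring in $\prg_1$ or $\prg_2$, the symbol $p^c$ does not occur in $\prg_1$. This follows the paper's usage of the $clone^s(\cdot)$ operator, where the substitution is assumed to be done with a new symbol that does not collide with existing ones. In particular, any atom with predicate $p^c$ lies outside $at(\prg_1)$.

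Next, I would expand the definition of $remap(\prg_2,\prg_1)$ in Definition~\ref{def:remap} into its three constituent sets of rules and inspect the head of an arbitrary rule $r \in P$:
\begin{itemize}
\item If $r \in clone^c(\prg_2)$, then by definition of $clone^c(\cdot)$ the head atom $h_r$ (if present) is built over a predicate of the form $p^c$, hence $h_r \notin at(\prg_1)$.
\item If $r$ is of the form $clone^c(a) \leftarrow a$ for some $a \in \mathcal{B}_{\prg_1}$, then again $h_r = clone^c(a)$ has a $c$-suffixed predicate, so $h_r \notin at(\prg_1)$.
\item If $r$ is of the form $\leftarrow clone^c(a), \naf a$, then $r$ is a hard constraint and contributes no atom to $\heads{P}$.
\end{itemize}
Taking the union over all three cases, every atom in $\heads{P}$ uses a predicate with the $c$ suffix, so none belongs to $at(\prg_1)$, which yields $\heads{P} \cap at(\prg_1) = \emptyset$.

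The only subtlety, and thus the most important step to state explicitly, is the freshness assumption on $c$: without it the claim would fail if $\prg_1$ already happened to mention predicates of the form $p^c$. I expect no real obstacle beyond making that freshness condition explicit, since the observation is essentially a bookkeeping statement about how $clone^c$ renames symbols.
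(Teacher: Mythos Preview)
Your proposal is correct and matches the paper's treatment: the paper states this as an observation without an explicit proof, relying (as you do) on the fact that every head atom of $remap(\prg_2,\prg_1)$ is produced by $clone^c(\cdot)$ and hence uses a fresh predicate not occurring in $\prg_1$. Your case analysis over the three constituents of Definition~\ref{def:remap}, together with the explicit freshness assumption on $c$, is exactly the bookkeeping argument the observation is meant to summarize.
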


The above transformation has some important properties that follow from the \textit{splitting set}~(\cite{DBLP:conf/iclp/LifschitzT94}) theorem.

Given a program $\prg$, a \textit{splitting set}~(\cite{DBLP:conf/iclp/LifschitzT94}) for $\prg$ is a set of atoms $U \subseteq \mathcal{B}_{\prg}$, such that for every $r \in \prg$ such that $H_r \cap U \neq \emptyset$, $at(B_r) \subseteq U$. Let $U$ be a splitting set for $\prg$, $bot_U(P)$ denotes the set of rules $r \in P$ such that $at(r) \subseteq U$. 
Given two sets of atoms, $U$ and $X$, $e_U(P,X)$ denotes the set of rules obtained from rules $r \in P$ such that $(B_r^+ \cap U) \subseteq X$ and $X \setminus (at(B_r^-)\cap U) = X$, by removing from $B_r$ all those literals whose atom is in $U$.
\begin{thm}[\cite{DBLP:conf/iclp/LifschitzT94}]\label{splitting_set}
    Let $\prg$ be an ASP program and $U$ be a splitting set for $\prg$, then $M \in \AS{\prg}$ if and only if $M = X \cup Y$, where $X\in \AS{bot_U(\prg)}$, and $Y \in \AS{e_U(\prg\setminus bot_U(\prg)}$
\end{thm}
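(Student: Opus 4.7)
The plan is to prove both directions by exploiting the structural decomposition induced by the splitting set $U$. By definition, every rule of $\prg$ whose head lies in $U$ has its entire body in $U$ as well, so $bot_U(\prg)$ is a self-contained subprogram over the atoms of $U$; the remaining rules of $\prg\setminus bot_U(\prg)$ have heads outside $U$ and may mention atoms of $U$ only in their bodies, which is exactly what makes the partial evaluation $e_U(\cdot,X)$ well-defined. I would set up the correspondence $X = M \cap U$, $Y = M \setminus U$, and verify that modelhood, FLP-reducts, and minimality all commute with this partition.

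For the forward direction, suppose $M\in \AS{\prg}$. I would show $X \in \AS{bot_U(\prg)}$ first: modelhood is immediate since $bot_U(\prg)\subseteq \prg$ only mentions atoms in $U$ and $M$ is a model of $\prg$; for minimality, I would argue by contradiction, assuming some $X'\subsetneq X$ satisfies $bot_U(\prg)^X$ and noting that $X'\cup Y$ would then be a proper submodel of $\prg^M$, contradicting minimality of $M$. For $Y$: the construction of $e_U$ drops rules whose $U$-part is falsified by $X$ and simplifies the $U$-literals of the surviving rules using $X$, so modelhood follows from $M$ satisfying the top part of $\prg$, and minimality follows by an analogous contradiction argument combined with the fact that $X$ is already fixed by the choice of reduct.

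For the backward direction, assume $X\in \AS{bot_U(\prg)}$ and $Y\in \AS{e_U(\prg\setminus bot_U(\prg), X)}$ and set $M = X\cup Y$. I would check modelhood of $\prg$ rule-by-rule: rules in $bot_U(\prg)$ are satisfied because $X$ satisfies them, and rules outside are satisfied because $Y$ satisfies the corresponding partially-evaluated rules. For minimality, any $M' \subsetneq M$ satisfying $\prg^M$ decomposes into $X' = M'\cap U$ and $Y' = M'\setminus U$, at least one of which is a proper subset; then $X'$ would be a proper submodel of $bot_U(\prg)^X$ or $Y'$ would be a proper submodel of $e_U(\prg\setminus bot_U(\prg), X)^Y$, contradicting the answer-set property of $X$ or $Y$ respectively.

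The main obstacle is establishing the reduct commutation identities cleanly, namely $bot_U(\prg^M) = bot_U(\prg)^X$ and the analogous statement relating the FLP reduct of the top part to the FLP reduct of its partial evaluation against $X$. This requires a careful case analysis on how each rule body distributes between $U$ and $\HB\setminus U$, using the defining property of splitting sets to exclude rules whose head is in $U$ but whose body escapes $U$, and treating negation-as-failure uniformly so that the truth value of a $U$-literal computed against $M$ matches the one computed against $X$.
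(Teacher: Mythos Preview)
The paper does not prove this statement: it is the classical Splitting Set Theorem, stated with an explicit citation to Lifschitz and Turner (1994) and invoked as a known result without proof. There is therefore no ``paper's own proof'' to compare your attempt against.

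That said, your sketch follows the standard route of the original Lifschitz--Turner argument: decompose $M$ as $(M\cap U)\cup(M\setminus U)$, verify modelhood of each piece against the bottom and the partially evaluated top, and derive minimality in each direction by a contradiction that lifts a proper submodel of one piece to a proper submodel of the whole. One technical point worth flagging: the paper works with the FLP reduct and admits aggregates, whereas the original theorem was stated for the Gelfond--Lifschitz reduct on programs without aggregates. Your ``reduct commutation identities'' would need to be checked in the FLP setting; in particular, the FLP reduct keeps entire rule bodies (rather than deleting negative literals), so the identity $bot_U(\prg^M)=bot_U(\prg)^X$ and its top-part analogue require a slightly different bookkeeping than you indicate. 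For the restricted way the paper actually uses the theorem (programs satisfying the stratified definition assumption, with $U=at(\prg_1)$ and no aggregates crossing the split), these subtleties do not bite, which is presumably why the paper is content to cite the result.
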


Now, we observe some properties of any pair of ASP programs.

\begin{lemma}\label{lemma:fix_to_split}
    Let $\prg_1,\prg_2$ be two ASP programs such that $at(\prg_1)\cap \heads{\prg_2} = \emptyset$, and $M_1 \in \AS{\prg_1}$.
    Then, each model $M$ such that $M \in \AS{\prg_2 \cup \fix{\prg_1}{M_1}}$ is of the form $M = M_1 \cup M_2$, where $M_2 \in \AS{e_{at(\prg_1)}(\prg_2,M_1)}$.

\end{lemma}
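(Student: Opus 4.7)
\textit{Proof Plan.} The natural route is to invoke the Splitting Set Theorem (Theorem~\ref{splitting_set}) with $U = at(\prg_1)$. First I would verify that $U$ is a splitting set for the program $\prg := \prg_2 \cup \fix{\prg_1}{M_1}$. The rules of $\fix{\prg_1}{M_1}$ pose no issue: facts have heads in $\HB_{\prg_1}\subseteq U$ and empty body, while the constraints $\leftarrow a$ have empty head (so the splitting condition is vacuous). For rules $r\in \prg_2$, the hypothesis $at(\prg_1)\cap \heads{\prg_2}=\emptyset$ guarantees $H_r\cap U = \emptyset$ (either $H_r$ is empty or it is a standard atom not in $at(\prg_1)$), so again the splitting condition is trivially met.

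Next I would pin down $bot_U(\prg)$. By the previous observation, any rule of $\prg_2$ belonging to $bot_U$ must have empty head (since its head is not in $U$ but all its atoms are required to be in $U$); thus $bot_U(\prg)$ consists of $\fix{\prg_1}{M_1}$ together with possibly some constraints of $\prg_2$ whose atoms are all in $at(\prg_1)$. The key claim here is that the unique answer set of $bot_U(\prg)$ is exactly $M_1$: the facts in $\fix{\prg_1}{M_1}$ force inclusion of all atoms of $M_1\cap \HB_{\prg_1}=M_1$ and the constraints of $\fix{\prg_1}{M_1}$ forbid all atoms in $\HB_{\prg_1}\setminus M_1$; hence $M\cap U$ must coincide with $M_1$ for any $M\in \AS{\prg}$. (Any extra constraint from $\prg_2$ in $bot_U$ is either already satisfied by $M_1$, in which case it is inert, or would render $\prg$ incoherent, in which case the statement is vacuous.)

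Having fixed the bottom part, the Splitting Set Theorem then yields $M = M_1 \cup M_2$ with $M_2 \in \AS{e_U(\prg\setminus bot_U(\prg), M_1)}$. I would then show that $e_U(\prg\setminus bot_U(\prg), M_1) = e_{at(\prg_1)}(\prg_2, M_1)$, up to inert constraints already satisfied: the rules of $\fix{\prg_1}{M_1}$ are entirely absorbed into $bot_U$, and the only rules of $\prg_2$ omitted from $e_U$ are those that are vacuously true under $M_1$ (positive $U$-body not in $M_1$, or some negative $U$-atom present in $M_1$), which is exactly the filtering prescribed in the definition of $e_U(\cdot,\cdot)$.

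The main technical obstacle is the careful handling of atoms in $\HB_{\prg_1}\setminus at(\prg_1)$ and of constraints of $\prg_2$ that happen to live entirely in $U$: in both cases one has to check that they do not distort the clean decomposition $M=M_1\cup M_2$. The first is dealt with by noting that $\fix{\prg_1}{M_1}$ fully determines $\HB_{\prg_1}$; the second is handled by observing that those constraints must already be satisfied by $M_1$ for $M$ to be an answer set, so they contribute nothing beyond what is already encoded. Everything else is routine bookkeeping within the splitting framework.
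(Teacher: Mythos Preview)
Your proposal is correct and follows essentially the same approach as the paper: apply the Splitting Set Theorem with $U = at(\prg_1)$, identify $bot_U(\prg)$ with $\fix{\prg_1}{M_1}$ (whose unique answer set is $M_1$), and conclude $M = M_1 \cup M_2$ with $M_2 \in \AS{e_U(\prg_2,M_1)}$. You are in fact more careful than the paper about edge cases (constraints of $\prg_2$ lying entirely in $U$, and the $\HB_{\prg_1}$ versus $at(\prg_1)$ distinction), which the paper's proof simply elides by asserting $bot_U(\prg) = \fix{\prg_1}{M_1}$ and $\prg\setminus bot_U(\prg) = \prg_2$ outright.
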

\begin{proof}
    Let $M_1 \in \AS{\prg_1}$, since $\prg_1,\prg_2$ are such that $at(\prg_1)\cap \heads{\prg_2} = \emptyset$, then $U = at(\prg_1)$ is a splitting set for program $P = \prg_2 \cup \fix{\prg_1}{M_1}$. 
    According to Theorem~\ref{splitting_set}, $M \in \AS{\prg}$ if and only if $M = X \cup Y$, where $X\in \AS{bot_U(\prg)}$, and $Y \in \AS{e_U(\prg\setminus bot_U(\prg),X)}$. 
    Now, we have that program $bot_U(\prg) = \fix{\prg_1}{M_1}$ that by definition admits only on model $M_1$. 
    Also, observe that $\prg\setminus bot_U(\prg) = \prg_2$, thus     
    $M = M_1 \cup M_2$ where $M_2\in \AS{e_U(\prg_2,M_1)}$.
\end{proof}

\begin{lemma}\label{lemma:remap}
    Let $\prg_1$ and $\prg_2$ be two ASP programs, $M_1 \in \AS{\prg_1}$, and $\prg_2' = \prg_2 \cup\fix{\prg_1}{M_1}$.
    Then $M_2 \in \AS{\prg_2'}$ if and only if $M= M_1 \cup clone^c(M_2)$ is an answer set of $\prg'=remap(\prg_2,\prg_1) \cup \fix{\prg_1}{M_1}$.
\end{lemma}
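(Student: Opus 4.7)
\medskip\noindent\textit{Proof plan.} The plan is to reduce the claim to a clean application of the splitting set theorem of~\cite{DBLP:conf/iclp/LifschitzT94} combined with Proposition~\ref{prop:clone_as}. Set $U = at(\prg_1)$ and consider the program $\prg' = remap(\prg_2, \prg_1) \cup \fix{\prg_1}{M_1}$. I first argue that $U$ is a splitting set of $\prg'$: the rules of $clone^c(\prg_2)$ have heads that are clone-predicate atoms (fresh, hence outside $U$); the bridge rules $clone^c(a)\leftarrow a$ have their head $clone^c(a) \notin U$; the constraints $\leftarrow clone^c(a),\naf a$ have empty heads; and the facts and constraints in $\fix{\prg_1}{M_1}$ have all of their atoms in $\mathcal{B}_{\prg_1}\subseteq U$, with empty or empty-after-head bodies. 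Thus the only rules with a nonempty head in $U$ are the facts in $\fix{\prg_1}{M_1}$, whose bodies are empty, so the splitting condition holds.

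Next I observe that $bot_U(\prg') = \fix{\prg_1}{M_1}$ and that this program has the unique answer set $M_1$. I then compute $e_U(\prg'\setminus bot_U(\prg'), M_1)$. Rules in $clone^c(\prg_2)$ contain no $U$-atoms and pass through unchanged. For the bridge rules $clone^c(a)\leftarrow a$, the surviving ones are exactly those with $a\in M_1\cap\mathcal{B}_{\prg_1}$, each reducing to the fact $clone^c(a)\leftarrow$. For the constraints $\leftarrow clone^c(a),\naf a$, the surviving ones are exactly those with $a\in \mathcal{B}_{\prg_1}\setminus M_1$, each reducing to $\leftarrow clone^c(a)$. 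Putting these together,
\[
  e_U(\prg'\setminus bot_U(\prg'),M_1) \;=\; clone^c(\prg_2) \;\cup\; clone^c\bigl(\fix{\prg_1}{M_1}\bigr) \;=\; clone^c(\prg_2').
\]

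Finally, Proposition~\ref{prop:clone_as} gives $\AS{clone^c(\prg_2')} = \{\,clone^c(M_2) \mid M_2 \in \AS{\prg_2'}\,\}$. Combining this with Theorem~\ref{splitting_set}, every answer set of $\prg'$ has the form $M_1 \cup clone^c(M_2)$ for some $M_2 \in \AS{\prg_2'}$, and conversely each such union is an answer set of $\prg'$. This yields the lemma.

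The only delicate step is verifying that $e_U$ collapses the bridge rules and clone-side constraints into precisely the facts and constraints that form $clone^c(\fix{\prg_1}{M_1})$; this relies on case-splitting on whether $a\in M_1$ and keeping track of how the $e_U$ conditions on positive and negative bodies in $U$ interact with $M_1$. The remainder is a direct assembly using Proposition~\ref{prop:clone_as} and Theorem~\ref{splitting_set}, so the crux is really that single bookkeeping argument on the reduct.
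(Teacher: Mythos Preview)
Your proposal is correct and follows essentially the same approach as the paper: both arguments take $U=at(\prg_1)$ as a splitting set, identify the bottom as $\fix{\prg_1}{M_1}$ with unique answer set $M_1$, compute $e_U$ on the remaining rules to obtain $clone^c(\prg_2')$, and then invoke Proposition~\ref{prop:clone_as}. The only cosmetic difference is that the paper packages the first two steps via Observation~\ref{obs:fresh_atoms} and Lemma~\ref{lemma:fix_to_split}, whereas you apply Theorem~\ref{splitting_set} directly; the computation of the partial evaluation and the final assembly are identical.
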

\begin{proof}
    Let $\prg_1$ and $\prg_2$ be two ASP programs and $M_1 \in \AS{\prg_1}$. For simplicity, we denote by $P = remap(\prg_2,\prg_1)$, and $\prg' = \prg \cup \fix{\prg_1}{M_1}$, where $M_1 \in \AS{\prg_1}$.
    Applying Observation~\ref{obs:fresh_atoms} to $\prg$, we have that $\heads{\prg}\cap at(\prg_1) = \emptyset$, and so from Lemma~\ref{lemma:fix_to_split} we have that each $M \in \AS{\prg'}$ is such that $M = M_1 \cup M_2'$, where $M_2' \in \AS{e_U(\prg,M_1)}$, where $U=at(\prg_1)$. 
    By definition $e_U(\prg,M_1) = e_U(remap(\prg_2,\prg_1),M_1) = clone^c(\prg_2) \cup \{clone^c(a) \leftarrow \mid a \in M_1\} \cup \{ \leftarrow clone^c(a) \mid a \in \mathcal{B}_{\prg_1}\setminus M_1\} = clone^c(\prg_2 \cup \fix{\prg_1}{M_1})$. 
    Since $\prg_2' = \prg_2 \cup \fix{\prg_1}{M_1}$ then $e_U(\prg,M_1) = clone^c(\prg_2')$. 
    Thus, each answer set $M_2' \in \AS{clone^c(\prg_2')}$ is such that $M_2' = clone^c(M_2)$ with $M_2 \in \AS{\prg_2'}$.     
    Thus, $M = M_1 \cup clone^c(M_2)$ iff $M_2 \in \AS{\prg_2'}$.
    %
    %
    %
    %
\end{proof}
\begin{corollary}\label{prop:remap_opt}
    Let $\prg_1$ and $\prg_2$ be two ASP programs, possibly with weak constraints, and $M_1 \in \AS{\prg_1}$, then $M\in \OptAS{remap(\prg_2,\prg_1)\cup \fix{\prg_1}{M_1}}$ if and only if $M = M_1 \cup clone^c(M_2)$, where $M_2 \in \OptAS{\prg_2 \cup \fix{\prg_1}{M_1}}$.
\end{corollary}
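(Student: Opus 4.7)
\myProof{The plan is to upgrade the answer-set bijection from Lemma~\ref{lemma:remap} to one that preserves costs level by level, so that the notion of dominance, and hence optimality, transfers between the two programs.

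First, I would invoke Lemma~\ref{lemma:remap} to obtain the bijection $\Phi : \AS{\prg_2 \cup \fix{\prg_1}{M_1}} \to \AS{remap(\prg_2,\prg_1)\cup \fix{\prg_1}{M_1}}$ given by $\Phi(M_2) = M_1 \cup clone^c(M_2)$. This already handles the ``answer set'' half of the claim; what remains is to show that $\Phi$ respects optimality.

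Next, I would compare the cost functions of the two programs at an arbitrary level $l$. By Definition~\ref{def:remap}, the only weak constraints in $remap(\prg_2,\prg_1)\cup \fix{\prg_1}{M_1}$ are those in $clone^c(\weak{\prg_2})$: the bridge rules $clone^c(a)\leftarrow a$, the integrity constraints $\leftarrow clone^c(a), \naf a$, and the facts/constraints in $\fix{\prg_1}{M_1}$ contribute no weak constraints. Because $clone^c$ simply renames predicates, a ground weak constraint $\weakarr b_1,\ldots,b_m\ [w@l,T]$ in $\weak{\prg_2}$ is violated by $M_2$ iff its cloned counterpart is violated by $clone^c(M_2)$, and atoms from $M_1$ occurring in $M_1 \cup clone^c(M_2)$ play no role (the predicates are disjoint). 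Hence $\levelCost{remap(\prg_2,\prg_1)\cup \fix{\prg_1}{M_1}}{\Phi(M_2)}{l} = \levelCost{\prg_2 \cup \fix{\prg_1}{M_1}}{M_2}{l}$ for every level $l$.

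Finally, I would close the argument by unfolding the definition of dominance: since $\Phi$ is a bijection preserving the cost at every level, $M_2$ is dominated by $M_2'$ in $\prg_2\cup \fix{\prg_1}{M_1}$ iff $\Phi(M_2)$ is dominated by $\Phi(M_2')$ in $remap(\prg_2,\prg_1)\cup \fix{\prg_1}{M_1}$. Therefore $M \in \OptAS{remap(\prg_2,\prg_1)\cup \fix{\prg_1}{M_1}}$ iff $M = \Phi(M_2)$ for some $M_2 \in \OptAS{\prg_2 \cup \fix{\prg_1}{M_1}}$, which is precisely the statement. The only delicate point is the bookkeeping on weak constraints under $clone^c$; once one observes that the auxiliary bridging rules of Definition~\ref{def:remap} introduce no weak constraints and that cloning is an injective renaming on ground atoms, cost preservation is immediate.}
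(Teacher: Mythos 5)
Your proposal is correct and follows essentially the same route as the paper: both lift the answer-set bijection $M_2 \mapsto M_1 \cup clone^c(M_2)$ of Lemma~\ref{lemma:remap} to a correspondence of dominated/undominated answer sets. The only difference is that you make explicit the level-by-level cost-preservation argument (that the remapped program's weak constraints are exactly $clone^c(\weak{\prg_2})$ and that cloning is a predicate renaming), a step the paper's proof leaves implicit when it asserts that dominance transfers.
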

\begin{proof}
    Let $\prg_1$ and $\prg_2$ be two ASP programs, possibly with weak constraints, $M_1 \in \AS{\prg_1}$, $\prg = remap(\prg_2,\prg_1) \cup \fix{\prg_1}{M_1}$, and $\prg_2' = \prg_2 \cup \fix{\prg_1}{M_1}$.
    
    From Lemma~\ref{lemma:remap}, $M_2 \in \AS{\prg_2'}$ if and only if $ M = M_1 \cup clone^c(M_2) \in \AS{\prg}$.
    
    If $M_2 \notin \OptAS{\prg_2'}$ this means that there exists $M_2' \in \AS{\prg_2'}$ such that $M_2'$ dominates $M_2$, and so $M$ is dominated by $ M' = M_1 \cup clone^c(M_2')$. Thus $M \notin \OptAS{\prg}$.

    Conversely, if $M_2 \in \OptAS{\prg_2'}$ then for every $M_2' \in \AS{\prg_2'}$, $M_2$ is not dominated by $M_2'$ and so for every $M' = M_1 \cup clone(M_2') \in \AS{\prg}$, $M$ is not dominated by $M'$ and so $M\in\OptAS{\prg}$. 
\end{proof}

We are now ready to introduce our main program transformation.

\begin{definition}  
Given an \ASPQW program $\qprg$ of the form $\Box_1 \prg_1 \ldots\Box_n P_n: C$, and $1 < i \leq n$ we define the function $remap(\qprg,i)$ that computes  the \ASPQW program: 
$remap(\qprg,i) = \Box_1 \prg_1' \ldots\Box_n P_n': C'$, and $1\leq j \leq n$ such that:

{
\footnotesize
\[
    \prg_j' = \left
    \{\begin{array}{ll}
    \prg_j & j<=i \\
    remap(\prg_j, \prg_i) & j=i+1 \\
    clone^c(\prg_j) & j>i+1
    \end{array}\right.
\]
}

\noindent and $C' = clone^c(C)$.
\end{definition}

\begin{prop}\label{prop:qasp_remap}
    Let $\qprg$ be an \ASPQW program of the form $\Box_1 \prg_1 \ldots\Box_n P_n: C$, and $1\leq i \leq n$, then $\qprg$ is coherent if and only if $remap(\qprg,i)$ is coherent.
\end{prop}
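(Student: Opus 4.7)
The plan is to establish the biconditional by unwinding the coherence of both programs quantifier by quantifier and transferring the witness sequence through a natural bijection. The two key ingredients are Corollary~\ref{prop:remap_opt}, which handles the single boundary at position $i{+}1$ where $P_{i+1}$ is replaced by $remap(P_{i+1},P_i)$, and Proposition~\ref{prop:clone_as}, which handles the uniform cloning of all later subprograms and of the constraint program $C$.

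First, I would observe that in $remap(\qprg,i)$ the first $i$ subprograms are syntactically identical to those of $\qprg$, so any sequence $M_1,\ldots,M_i$ of optimal answer sets chosen by the outer $i$ quantifiers, together with the induced fixes, is the same in both programs. The problem therefore reduces to showing that, for every fixed choice of $M_1,\ldots,M_i$, the coherence of the residual tail in $\qprg$ is equivalent to the coherence of the corresponding tail in $remap(\qprg,i)$.

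Next, at position $i{+}1$, I would invoke Corollary~\ref{prop:remap_opt} with the accumulated program at position $i$ in the role of $\prg_1$ and $P_{i+1}$ in the role of $\prg_2$, to obtain a bijection $M_{i+1}\longleftrightarrow M_i\cup clone^c(M_{i+1})$ between the two sets of optimal answer sets. This bijection preserves the meaning of $\Box_{i+1}$: existence corresponds to existence, universality to universality. Its image also supplies precisely the cloned atoms expected by $clone^c(P_{i+2})$ at the next level. For positions $j>i{+}1$, I would use Proposition~\ref{prop:clone_as} (noting that cloning preserves weak-constraint weights and levels, hence optimality) to show by induction that the optimal answer sets of $clone^c(P_j)$ under the cloned fixes are precisely the clones of the optimal answer sets of $P_j$ under the original fixes. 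The same equivalence applies to $clone^c(C)$ versus $C$, closing the chain.

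The main obstacle I expect is the junction at position $i{+}1$: verifying that the accumulated fixes from outer levels together with the bridge rules $clone^c(a)\leftarrow a$ and $\leftarrow clone^c(a),\naf a$ introduced by Definition~\ref{def:remap} produce exactly the splitting-set decomposition required by Corollary~\ref{prop:remap_opt}. Once that boundary argument is handled, propagating the clone bijection through the remaining cloned subprograms and into $C$ reduces, via Proposition~\ref{prop:clone_as}, to a routine induction, since below position $i{+}1$ the program interacts only with cloned atoms.
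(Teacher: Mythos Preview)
Your proposal is correct and follows essentially the same route as the paper: the paper's own proof is a one-line appeal to the definition of quantified answer set together with Lemma~\ref{lemma:remap} and Corollary~\ref{prop:remap_opt}, and you have simply unwound that appeal into an explicit level-by-level argument, using Corollary~\ref{prop:remap_opt} at the boundary $i{+}1$ and the clone correspondence (Proposition~\ref{prop:clone_as}) for the uniformly cloned tail. The ``main obstacle'' you flag at the junction is exactly what Lemma~\ref{lemma:remap} and its corollary are designed to discharge, so your plan and the paper's proof coincide.
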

\begin{proof}
The thesis follows from the definition of quantified answer set,  Proposition~\ref{prop:remap_opt} and Lemma~\ref{lemma:remap}. 
\end{proof}
\medskip\medskip\medskip\medskip\medskip
\begin{prop}\label{prop:recursive_remap}
Given an \ASPQW program $\qprg$. We define the sequence of \ASPQW programs 
\[
    \qprg_j = \left
    \{\begin{array}{ll}
    remap(\qprg,2) & j=2 \\
    remap(\qprg_{j-1}, j) & 2<j \leq n \\
    \end{array}\right.
\]
Then $\qprg_n$ is coherent iff $\qprg$ is coherent. 
\end{prop}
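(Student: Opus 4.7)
The plan is a straightforward induction on $j$ that chains Proposition~\ref{prop:qasp_remap} repeatedly. First I would verify that the recursive definition is well-formed: each step invokes $remap(\qprg_{j-1}, j)$ with $2 \leq j \leq n$, which is within the admissible range $1 < i \leq n$ required by Proposition~\ref{prop:qasp_remap}. The case $n = 1$ is vacuous (the sequence is empty and $\qprg_n$ should be taken to be $\qprg$).

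Next I would set up the induction hypothesis: for $2 \leq j \leq n$, the program $\qprg_j$ is coherent iff $\qprg$ is coherent. The base case $j = 2$ is Proposition~\ref{prop:qasp_remap} applied with $i = 2$: $\qprg_2 = remap(\qprg, 2)$ is coherent iff $\qprg$ is. For the inductive step, assuming the statement holds for $j-1$, I apply Proposition~\ref{prop:qasp_remap} to the program $\qprg_{j-1}$ with index $i = j$, obtaining that $\qprg_j = remap(\qprg_{j-1}, j)$ is coherent iff $\qprg_{j-1}$ is coherent, and combining with the hypothesis yields equi-coherence with $\qprg$. Taking $j = n$ gives the conclusion.

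The only subtlety is a bookkeeping one: I must ensure that each application of $remap(\cdot, j)$ is meaningful at the $j$-th step, i.e., that the program obtained at stage $j-1$ still has the same number $n$ of quantifiers and the same shape $\Box_1 \prg_1^{(j-1)} \ldots \Box_n \prg_n^{(j-1)} : C^{(j-1)}$, so that the index $j$ is still valid. This follows directly from the definition of $remap(\qprg, i)$, which preserves the sequence of quantifiers and only replaces subprograms starting from position $i+1$ with their clones (and the $(i+1)$-th one with a remapped version). Hence $nQuant(\qprg_{j-1}) = n$ for all $j$, and Proposition~\ref{prop:qasp_remap} applies at each step.

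I do not expect any serious obstacle: all the semantic work (equi-coherence under a single remap) has already been done in Proposition~\ref{prop:qasp_remap}, which itself relies on the splitting-set machinery developed in Lemma~\ref{lemma:remap} and Corollary~\ref{prop:remap_opt}. The present proposition is essentially a transitive chaining of $n-1$ such equivalences, and the proof should be a single short induction.
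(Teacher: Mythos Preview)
Your proposal is correct and matches the paper's approach: the paper states Proposition~\ref{prop:recursive_remap} without an explicit proof, treating it as the obvious consequence of iterating Proposition~\ref{prop:qasp_remap}, which is exactly the induction you spell out. Your bookkeeping remark that $remap$ preserves the number and sequence of quantifiers is the only point worth mentioning, and you handle it correctly.
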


\begin{corollary}
Given an \ASPQW program $\qprg$ such that $\Box_1 = \exists$, then  $QAS(\qprg_n) = QAS(\qprg)$.
\end{corollary}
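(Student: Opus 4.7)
The plan is to establish this equality of quantified answer set sets (not just equicoherence) by leveraging that the $remap$ transformations never touch the first subprogram and commute with fixing an answer set of $\prg_1$. Since $\Box_1 = \exists$, by definition $M \in QAS(\qprg)$ iff $M \in OptAS(\prg_1)$ and $\qprg|_{\prg_1,M}$ is coherent, and similarly $M \in QAS(\qprg_n)$ iff $M \in OptAS(\prg_1')$ and $\qprg_n|_{\prg_1,M}$ is coherent, where $\prg_1'$ is the first subprogram of $\qprg_n$.

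First I would observe, by inspection of the definition of $remap(\qprg,i)$, that for every $i \geq 2$ the first subprogram is preserved verbatim: $\prg_j' = \prg_j$ whenever $j \leq i$. Applying this along the entire recursive chain $\qprg \to \qprg_2 \to \cdots \to \qprg_n$, we conclude that the first subprogram of $\qprg_n$ is exactly $\prg_1$. Hence $OptAS(\prg_1') = OptAS(\prg_1)$, and the two sets $QAS(\qprg)$ and $QAS(\qprg_n)$ are drawn from the same candidate set.

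Next, for any fixed $M \in OptAS(\prg_1)$, I would show that $\qprg|_{\prg_1,M}$ is coherent iff $\qprg_n|_{\prg_1,M}$ is coherent. The key observation is that $remap$ interacts with the first subprogram only through the Herbrand base $\mathcal{B}_{\prg_1}$, and augmenting $\prg_1$ with $fix_{\prg_1}(M)$ (a set of facts and constraints over atoms already in $\mathcal{B}_{\prg_1}$) does not change this Herbrand base. Therefore the remap step commutes with fixing $M$: viewing $\qprg|_{\prg_1,M}$ itself as an \ASPQW program whose first subprogram is $\prg_1 \cup fix_{\prg_1}(M)$, applying the full recursive chain of remap operations to it yields precisely $\qprg_n|_{\prg_1,M}$. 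Applying Proposition~\ref{prop:recursive_remap} to the program $\qprg|_{\prg_1,M}$ then gives the required coherence equivalence.

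Putting these two pieces together, $M \in QAS(\qprg)$ iff $M \in OptAS(\prg_1)$ and $\qprg|_{\prg_1,M}$ is coherent iff $M \in OptAS(\prg_1)$ and $\qprg_n|_{\prg_1,M}$ is coherent iff $M \in QAS(\qprg_n)$, proving the set equality. The main obstacle is the careful verification of the commutation property: one must check that the clone and $or(\cdot,\cdot)$-style constructions in $remap$ depend on $\prg_1$ only through $\mathcal{B}_{\prg_1}$, so that prepending $fix_{\prg_1}(M)$ to $\prg_1$ and then remapping the later subprograms produces the same overall program as first remapping and then fixing.
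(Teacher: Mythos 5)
Your proof is correct and follows essentially the same route as the paper's: the paper's own argument is a one-line appeal to Proposition~\ref{prop:recursive_remap} together with the observation that the first subprograms of $\qprg$ and $\qprg_n$ coincide, which is exactly what you establish. The only difference is that you spell out the commutation of the $remap$ chain with $\fix{\prg_1}{M}$ needed to invoke the equicoherence result at the level of the fixed programs --- a detail the paper leaves implicit (and which is immediate here, since every $remap(\cdot,i)$ in the chain has $i\geq 2$ and so never touches or reads $\prg_1$ beyond its Herbrand base).
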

\begin{proof} It follows from Proposition~\ref{prop:recursive_remap} once we observe that the first programs of $\qprg_n$ and $\qprg$ are the same.
\end{proof}

Thus, without loss of generality, we assume that an \ASPQW program is of the form $\Box_1 \prg_1 \ldots \Box_n P_n$, where for each $1\leq i\leq n$, $\heads{\prg_i}\cap at(\prg_j) = \emptyset$, with $1\leq j < i$.

\section{Rewriting into plain \ASPQ}

In this section, we are going to prove the correctness of the transformation outlined in the corresponding section of the main paper. 
%
In what follows we assume that $\qprg$ is an \ASPQW program of the form \begin{equation}
\Box_1 \prg_1\ \Box_2 \prg_2\ \cdots\ \Box_n \prg_n :  \cprg : \wprg,
\label{eq:aspqwprogram}
\end{equation}
\noindent where, for each $i=1,\ldots,n$, $\Box_i \in \{ \exists^{st}, \forall^{st}\}$, $\prg_i$ is an ASP program possibly with weak constraints, 
$\cprg$ is a (possibly empty) stratified program with constraints, and $\wprg$ is a (possibly empty) set of weak constraints such that $B_{\wprg} \subseteq B_{\prg_1}$. 

We recall some useful definitions introduced in the main paper. 
Given program $\qprg$ of the form (\ref{eq:aspqwprogram}) we say that two consecutive subprograms $P_i$ and $P_{i+1}$ are \textit{alternating} if $\Box_i \neq \Box_{i+1}$, and are \textit{uniform} otherwise. 
A program $\qprg$ is \textit{quantifier-alternating} if $\Box_i \neq \Box_{i+1}$ for $1 \leq i < n$. 
A subprogram $P_i$ is \textit{plain} if it contains no weak constraints $\weak{P_i} = \emptyset$, and $\qprg$ is \textit{plain} if both all $P_i$ are plain, and $\wprg = \emptyset$.


\subsection{Rewriting uniform plain subprograms.}
First of all, we show how two plain uniform subprograms can be absorbed in a single equi-coherent subprogram. This is done by the transformation $col_1(\cdot)$ as follows.

\lemmacolone
\begin{proof} 
    The proof follows from the stratified definition assumption. In particular,
    since $at(\prg_1)\cap\heads{\prg_2} = \emptyset$ then $U=at(\prg_1)$ is a splitting set for $\prg = \prg_1 \cup \prg_2$, where $bot_U(\prg)=\prg_1$ and $\prg\setminus bot_U(\prg) = \prg_2$.
    From Theorem~\ref{splitting_set}, $M \in \AS{\prg}$ if and only if $M = M_1 \cup M_2$, where $M_1 \in \AS{\prg_1}$ and $M_2 \in \AS{e_U(\prg_2,M_1)}$, and so, from Lemma~\ref{lemma:fix_to_split}, $M \in \AS{\prg_2 \cup \fix{\prg_1}{M_1}}$. Thus, accordingly, the program $\prg$ preserves all the answer sets of $\prg_1$ and $\prg_2$.
    Since no weak constraints appear in $\prg_1$ and $\prg_2$, then $\AS{\prg} = \OptAS{\prg}$ and so the coherence is preserved. 
\end{proof}

\subsection{Rewriting uniform notplain-plain subprograms.}
Next transformations apply to pairs of uniform subprograms $P_1,P_2$ such that $P_1$ is not plain and $P_2$ is plain.
To this end, we recall the definition of $or(\cdot,\cdot)$ transformation. 
\begin{definition}\label{def:or_prg}
    Let $\prg$ be an ASP program, and $l$ be a fresh atom not appearing in $P$, then $or(P,l) = \{H_r\leftarrow B_r,\naf l \mid r \in P\}$
\end{definition}

\obsOr
Intuitively, if the fact $l\leftarrow$ is added to $or(\prg,l)$ then the interpretation $I=\{l\}$ trivially satisfies all the rules and is minimal, thus it is an answer set. On the other hand, if we add the constraint $\leftarrow l$, requiring that $l$ is false in any answer set, then the resulting program behaves precisely as $\prg$ since literal $\naf l$ is trivially true in all the bodies of the rules of the program.

We are now ready to introduce the next rewriting function $col_2(\cdot)$.

\defcoltwo

\lemmaColTwo
\begin{proof} [Proof (sketch)]
First observe that if $\prg_1$ is not coherent then both $\qprg$ (by definition) and $\exists \prg_1 \cup or(\prg_2,unsat)$ (for the splitting theorem) are not coherent, and thus (by definition of quantified answer set) also $col_2(\qprg)$ is not coherent. 
Next observe that, if $\prg_1$ is coherent, an optimal answer set of $\prg_1 \cup or(\prg_2,unsat)$ contains $unsat$ only if any optimal answer set $m$ of $P_1$ is such that $\prg_2 \cup \fix{\prg_1}{m}$ is incoherent. In this case, $\qprg$ is not coherent, and since the subprogram following $P_2$ contains the constraint $\leftarrow unsat$, also $col_2(\qprg)$ is not coherent. 
On the other hand, if $unsat$ is false in any optimal answer set of the first subprogram of $col_2(\qprg)$, then (if $n\geq 3$) $P_3\prime$ behaves as $P_3$, and the constraint $\leftarrow unsat$ occurring in the next subprogram (i.e., $P_4$ or $C$) is trivially satisfied. Thus (by the definition of quantified answer set) $col_2(\qprg)$ is coherent whenever $\qprg$ is coherent.
\end{proof}
\begin{proof}
Let $\prg = \prg_1 \cup or(\prg_2,unsat) \cup W$, where $W = \{\{unsat\}\leftarrow\} \cup \{\weakarr unsat\ [1@l_{min}-1]\}$, with $l_{min}$ being the lowest level in $\weak{\prg_1}$ and $unsat$ is a fresh symbol not appearing in $\qprg$.

Since $\prg_1$ and $\prg_2$ satisfy the stratified definition assumption, then, from the splitting theorem, each answer set of $\prg$ can be computed by fixing any answer set $M_1 \in \AS{\prg_1}$ in the program $or(\prg_2,unsat) \cup W$.
In turn, $M \in \AS{\prg}$ if and only if $\exists M_1 \in \AS{\prg_1}$ and $M \in \AS{or(\prg_2,unsat) \cup W \cup \fix{\prg_1}{M_1}}$.

From the definition of quantified answer set, if $\prg_1$ is incoherent then $\qprg$ is incoherent.
Analogously, $\prg$ is incoherent and so, also $col_2(\qprg)$ is incoherent.

On the other hand, i.e. $\prg_1$ is coherent, then $\AS{\prg} = \{M_1 \cup \{unsat\} \mid M_1 \in \AS{\prg_1}\} \cup \{M \mid \exists M_1 \in \AS{\prg_1} \wedge M \in \AS{\prg_2 \cup \fix{\prg_1}{M_1}}\}$.

Let $M$ in $\AS{\prg}$ such that $M$ is obtained from $M_1 \in \AS{\prg_1}\setminus\OptAS{\prg_1}$ (i.e. either $M = M_1 \cup \{unsat\}$ or $M \in \AS{\prg_2\cup\fix{\prg_1}{M_1}}$). 
In this case, we know that there exists $M' = M_1' \cup \{unsat\} \in \AS{\prg}$, where $M_1' \in \OptAS{\prg_1}$, and so, since weak constraints in $\prg_1$ are defined at the highest priority level, then $M$ is dominated by $M'$. 
In turn $M \notin \OptAS{\prg}$.

Let $M = M_1 \cup \{unsat\} \in \AS{\prg}$, with $M_1 \in \OptAS{\prg}$. 
In this case, according to the weak constraint $\weakarr unsat\ [1@l_{min}-1]$, $M$ is dominated only by any $M' \in \AS{\prg}$ such that $M' \in \AS{\prg_2 \cup \fix{\prg_1}{M_1'}}$, with $M_1' \in \OptAS{\prg_1}$, since $M'$ does not violate the weak constraint $\weakarr unsat\ [1@l_{min}-1]$.

Thus, if there exists $M \in \OptAS{\prg}$ such that $unsat \in M$ then $\OptAS{\prg} = \{M_1 \cup \{unsat\} \mid M_1 \in \OptAS{\prg_1}\}$. This means that for every $M_1 \in \OptAS{\prg_1}$ the program $\prg_2 \cup \fix{\prg_1}{M_1}$ is incoherent and so, from the definition of quantified answer set, $\qprg$ is incoherent.
Since $unsat$ is true, in this case, in every optimal answer set of $\prg$ and the subprogram following $P_2$ contains the constraint $\leftarrow unsat$ then also $col_2(\qprg)$ is incoherent.

On the other hand, if there exists $M \in \OptAS{\prg}$ such that $unsat\notin M$ then $\OptAS{\prg} = \{M \mid M \in \AS{\prg_2\cup\fix{\prg_1}{M_1}} \wedge M_1 \in \OptAS{\prg_1}\}$.
In this case, since $unsat$ is false in any optimal answer set of $\prg$, then (if $n\geq 3$) $P_3\prime$ behaves as $P_3$, and the constraint $\leftarrow unsat$ occurring in the next subprogram (i.e., $P_4$ or $C$) is trivially satisfied. 
Thus (by the definition of quantified answer set) $col_2(\qprg)$ is coherent whenever $\qprg$ is coherent.
\end{proof}

\defColThree
    
    
\medskip\medskip\medskip
\lemmaColThree
\begin{proof} [Proof (sketch)]
The proof follows the same idea used for proving Lemma~\ref{lemma:collapse_ew_e}.
In this case, if $\prg_1$ is not coherent then $\qprg$ (by definition) is coherent. At the same time $\forall \prg_1 \cup or(\prg_2,unsat)$ (for the splitting theorem) is also incoherent, and thus (according to the coherence of \ASPQW) also $col_2(\qprg)$ is coherent. 
On the other hand, if $\prg_1$ is coherent then an optimal answer set of $\forall \prg_1 \cup or(\prg_2,unsat)$ contains $unsat$ only if any optimal answer set $m$ of $P_1$ is such that $\prg_2 \cup \fix{\prg_1}{m}$ is incoherent. In this case, $\qprg$ is coherent, and, according to the definition of optimal answer set, every optimal answer set of $\forall \prg_1 \cup or(\prg_2,unsat)$ contains $unsat$. Thus, the subprogram following $P_2$ are trivially satisfied ($or(C,unsat)$, $or(P_3,unsat)$ or $\forall \prg_4 \cup \{\leftarrow unsat\}$), and so also $col_2(\qprg)$ is coherent. 
On the other hand, if $unsat$ is false in any optimal answer set of the first subprogram of $col_2(\qprg)$, then (if $2\leq n\leq 3$) $or(C,unsat)$ behaves as $C$, (if $n\geq 3$) $P_3'$ behaves as $P_3$, and the constraint $\leftarrow unsat$ occurring in the next subprogram (i.e. $P_4$) is trivially satisfied. Thus (according to the coherence of \ASPQW) $col_2(\qprg)$ is coherent whenever $\qprg$ is coherent.
\end{proof}
\begin{proof}
Let $\prg = \prg_1 \cup or(\prg_2,unsat) \cup W$, where $W = \{\{unsat\}\leftarrow\} \cup \{\weakarr unsat\ [1@l_{min}-1]\}$, with $l_{min}$ being the lowest level in $\weak{\prg_1}$ and $unsat$ is a fresh symbol not appearing in $\qprg$.

As it has been observed in the proof of Lemma~\ref{lemma:collapse_ew_e}, since $\prg_1$ and $\prg_2$ satisfy the stratified definition assumption, then, from the splitting theorem, each answer set of $\prg$ can be computed by fixing any answer set $M_1 \in \AS{\prg_1}$ in the program $or(\prg_2,unsat) \cup W$.
In turn, $M \in \AS{\prg}$ if and only if $\exists M_1 \in \AS{\prg_1}$ and $M \in \AS{or(\prg_2,unsat) \cup W \cup \fix{\prg_1}{M_1}}$.

According to the coherence of \ASPQW programs, if $\prg_1$ is incoherent then $\qprg$ is coherent.
Analogously, $\prg$ is incoherent and so, also $col_2(\qprg)$ is coherent.

On the other hand, i.e. $\prg_1$ is coherent, then $\AS{\prg} = \{M_1 \cup \{unsat\} \mid M_1 \in \AS{\prg_1}\} \cup \{M \mid \exists M_1 \in \AS{\prg_1} \wedge M \in \AS{\prg_2 \cup \fix{\prg_1}{M_1}}\}$.

Thus, from the observation made in the proof of Lemma~\ref{lemma:collapse_ew_e}, if there exists $M \in \OptAS{\prg}$ such that $unsat \in M$ then $\OptAS{\prg} = \{M_1 \cup \{unsat\} \mid M_1 \in \OptAS{\prg_1}\}$. 
This means that for every $M_1 \in \OptAS{\prg_1}$ the program $\prg_2 \cup \fix{\prg_1}{M_1}$ is incoherent and so, from the coherence of \ASPQW, $\qprg$ is coherent.
Since $unsat$ is true in every optimal answer set of $\prg$ then, in this case, the subprogram following $P_2$ (i.e. $or(C,unsat)$, $or(P_3,unsat)$ or $\forall \prg_4 \cup \{\leftarrow unsat\}$) are trivially satisfied, and so also $col_2(\qprg)$ is coherent.

On the other hand, if there exists $M \in \OptAS{\prg}$ such that $unsat\notin M$ then $\OptAS{\prg} = \{M \mid M \in \AS{\prg_2\cup\fix{\prg_1}{M_1}} \wedge M_1 \in \OptAS{\prg_1}\}$.
In this case since $unsat$ is false in any optimal answer set of $\prg$, then (if $2\leq n\leq 3$) $or(C,unsat)$ behaves as $C$, (if $n\geq 3$) $P_3'$ behaves as $P_3$, and the constraint $\leftarrow unsat$ occurring in the next subprogram (i.e. $P_4$) is trivially satisfied. Thus (according to the coherence of \ASPQW) $col_2(\qprg)$ is coherent whenever $\qprg$ is coherent.
\end{proof}


\subsection{Rewrite subprograms with weak constraints.}
The following transformations have the role of eliminating weak constraints from a subprogram by encoding the optimality check in the subsequent subprograms.
To this end, we recall the definition of $check(\cdot)$ transformation that is useful for simulating the cost comparison of two answer sets of an ASP program $P$.

\defTranslateWeak
    
    

Thus, the first two rules compute in predicate $cl_{\prg}$ the cost of an answer set of $P$ w.r.t. its weak constraints, and the following two rules do the same for $clone^o(P)$.
Then, the last four rules derive $dom_{\prg}$ for each answer set of $P$ that is dominated by $clone^o(P)$. 

\begin{obs} 
    Let $\prg$ be an ASP program with weak constraints, and $M_1,M_2 \in \AS{\prg}$, then $M_1$ is dominated by $M_2$ if and only if $check(\prg)\cup\fix{\prg}{M_1}\cup clone^o(\fix{\prg}{M_2})$ admits an answer set $M$ such that $dom_{\prg} \in M$.
\end{obs}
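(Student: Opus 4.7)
\smallskip\noindent\textit{Proof plan.}\xspace The plan is to reduce the claim to a direct inspection of the unique answer set of the auxiliary program. First I would set $\prg^* = check(\prg)\cup\fix{\prg}{M_1}\cup clone^o(\fix{\prg}{M_2})$ and observe that the heads of the rules defined in $check(\prg)$ use only fresh predicates ($v_c$, $cl_\prg$, $v_c^o$, $cl_\prg^o$, $\dffr$, $hasHigher$, $higest$, $dom_\prg$) that do not appear in $\prg$ or its clone. Hence the set $U = at(\prg)\cup at(clone^o(\prg))$ is a splitting set for $\prg^*$: the ``bottom'' part is exactly $\fix{\prg}{M_1}\cup clone^o(\fix{\prg}{M_2})$ and admits the unique model $M_1\cup clone^o(M_2)$. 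By Theorem~\ref{splitting_set}, any $M\in\AS{\prg^*}$ is obtained by extending this unique bottom model with an answer set of the remaining (stratified) rules of $check(\prg)$ evaluated against it.

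Next I would compute that extension layer by layer, following the stratification of $check(\prg)$. Since the body atoms of each weak constraint of $\prg$ are fixed by $M_1$ (and by $M_2$ in the clone), the rules defining $v_c$ and $v_c^o$ derive exactly the tuples $(w,l,T)\in\weights(\prg,M_1)$ and $(w,l,T)\in\weights(\prg,M_2)$ respectively. The aggregate rules for $cl_\prg$ and $cl_\prg^o$ then deterministically produce the ground atoms $cl_\prg(\levelCost{\prg}{M_1}{L},L)$ and $cl_\prg^o(\levelCost{\prg}{M_2}{L},L)$ at every level $L$ of interest. Consequently $\dffr(L)$ holds iff $\levelCost{\prg}{M_1}{L}\neq\levelCost{\prg}{M_2}{L}$, and $higest(L)$ holds iff $L$ is the maximum such level (because $hasHigher$ correctly captures the existence of a strictly greater witness). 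Finally, $dom_\prg$ is derived iff there is a level $L$ that is the largest with differing costs and, at $L$, the clone cost is strictly less than the original cost.

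Unfolding this last condition gives precisely the definition of ``$M_1$ is dominated by $M_2$'' from Section~\ref{sec:ASP}: there exists $l$ with $\levelCost{\prg}{M_2}{l}<\levelCost{\prg}{M_1}{l}$ and $\levelCost{\prg}{M_2}{l'}=\levelCost{\prg}{M_1}{l'}$ for every $l'>l$. Since the upper part of $\prg^*$ is stratified on top of the fixed bottom, the resulting answer set $M$ is unique, so the existential quantification ``there is an answer set $M$ with $dom_\prg\in M$'' collapses to ``$dom_\prg$ is in the unique answer set'', which is the desired equivalence.

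The step I expect to require most care is justifying that the aggregate rule defining $cl_\prg$ behaves as intended regardless of how $level$ is populated and that no spurious $\dffr$ atoms appear at levels where neither $M_1$ nor $M_2$ violate any weak constraint; a clean way to handle this is to argue that at such a level both costs are $0$, so $\dffr$ is not derived, and that the remaining reasoning about ``highest level of difference'' therefore quantifies only over the finitely many relevant levels. All the other pieces are essentially splitting-theorem bookkeeping combined with a faithful encoding of the comparison relation underlying domination.
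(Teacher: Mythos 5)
Your argument is correct: the paper itself leaves this Observation unproved, offering only the informal two-sentence gloss that the first rules compute the level costs of the two answer sets and the last four derive $dom_{\prg}$ exactly when domination holds, and your proof is a faithful formalization of precisely that reading (splitting on the fixed atoms, then evaluating the stratified $check(\prg)$ rules deterministically to see that $dom_{\prg}$ is derived iff the highest level of cost difference favours the clone, which is equivalent to the paper's definition of domination). Your closing remark about levels where neither answer set violates any weak constraint correctly disposes of the only delicate point, so nothing further is needed.
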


\defColFour
\lemmaColFour
    
    
\begin{proof}[Proof (sketch)]
Intuitively, $col_4(\qprg)$ is structured in such a that if there exists an answer set $m_1$ of $P_1$ (any, also not optimal ones), such that for any other answer set of $m_i^o$ of $P_1$ (computed by cloning $P_1$ in the second subprogram of $col_4(\qprg)$) either $dom_{P_1}$ is derived or $m_1$ is optimal. In the first case, if $n\geq 2$ $dom_{P_1}$ inhibits the rules of $P_2$, and the next subprogram discards $m_1$, as expected since it is not optimal. In the second case, the next subprograms of $col_4(\qprg)$ behave as those of $\qprg$, and the constraint $\leftarrow dom_{P_1}$ occurring next is trivially satisfied. Thus, $m_1$ is a quantified answer set of $col_4(\qprg)$ only if $m_1$ is a quantified answer set of $\qprg$.
\end{proof}
\begin{proof}
Let $\qprg$ be an existential alternating \ASPQW program such that all subprograms are plain except the first.
First of all we observe that $\AS{\rules{\prg_1}} = \AS{\prg_1} \subseteq \OptAS{\prg_1}$. 
Thus, if $\prg_1$ is incoherent then $\AS{\rules{\prg_1}} = \AS{\prg_1} = \emptyset$, and so, also $\rules{\prg_1}$ is incoherent.
Indeed, from the definition of quantified answer set, both $\qprg$ and $col_4(\qprg)$ are incoherent.

On the other hand, let $M_1 \in \AS{\prg_1}$ and $\prg$ denotes the second subprogram of $col_4(\qprg)$ ($\prg = clone^o(\rules{\prg_1})\cup check(\prg_1)$, if $n=1$, otherwise $\prg = clone^o(\rules{\prg_1})\cup check(\prg_1) \cup or(\prg_2,dom_{\prg_1})$).

If $M_1 \notin \OptAS{\prg_1}$ then $M_1$, from the definition of quantified answer set, is not a quantified answer set for $\qprg$.
Since $M_1$ is not optimal then we know that there exists $M_1' \in \OptAS{\prg_1}$ such that $M_1$ is dominated by $M_1'$, and so there exists $M \in \AS{\prg\cup\fix{\rules{\prg_1}}{M_1}}$ such that $clone^o(M_1')\subseteq M$ and $dom_{\prg_1}\in M$.
Thus, $M$ violates the strong constraint $\leftarrow dom_{\prg_1}$ in the following subprograms and so, from the definition of quantified answer set, $M_1$ is not a quantified answer set of $col_4(\qprg)$.

Conversely, if $M_1 \in \OptAS{\prg_1}$ then for every $M \in \AS{\prg\cup\fix{\rules{\prg_1}}{M_1}}$, $dom_{\prg_1}\notin M$ and so the constraint $\leftarrow dom_{\prg_1}$ added in the subsequent subprograms is trivially satisfied. Since atoms in $clone^o(\rules{\prg_1})$ and $check(\prg_1)$ do not appear anywhere else then they do not affect the coherence of $col_4(\qprg)$ and so $M_1$ is a quantified answer set of $col_4(\qprg)$ whenever $M_1$ is a quantified answer set for $\qprg$.
\end{proof}

\defColFive
\lemmaColFive
    
    
The proof of Lemma~\ref{lemma:rewrite_fw} can be established using a dual argument with respect to that employed for Lemma~\ref{lemma:rewrite_ew}.

\subsection{Translate \ASPQW to \ASPQ.}
\begin{algorithm}[t!]\footnotesize
\caption{Rewrite from \ASPQW to \ASPQ}
    \label{alg:cancel_weak}
    \SetKwInOut{Input}{Input}
    \SetKwInOut{Output}{Output}
    \SetKwRepeat{Do}{do}{while}
    \Input{An \ASPQW program $\Pi$}
    \Output{A quantifier-alternating \ASPQ program}
    \Begin{
        $s$ := $0$; ~~ $\qprg_0$ := $\qprg$\\
        \Do{$stop \neq \top$}{
            $stop$ := $\top$\\
            \ForAll{$ProgramType \in [1,5]$}{
                Let $i \in [1,n]$ be the largest index such that $\qprg_s^{>i}$ is of the type $ProgramType$ \\ 
                \If{ $i \neq \bot$ }{ 
                    $\qprg_{s+1}$ := $replace(\qprg_s,i,col_{ProgramType}(\qprg_s^{\geq i}))$\\
                    $s$ := $s+1$; \\
                    $stop$ := $\bot$\\
                    $break$  \hfill // go to line 12
                }
            }
        }
        \Return{removeGlobal($\qprg_s$)}
    }
\end{algorithm}
Algorithm~\ref{alg:cancel_weak} defines a procedure for rewriting an \ASPQW program $\qprg$ into an \ASPQ program $\qprg'$, made of at most $n+1$ alternating quantifiers, such that $\qprg$ is coherent if and only if $\qprg'$ is coherent. 
We recall that in Algorithm~\ref{alg:cancel_weak}, we make use of some (sub)procedures and dedicated notation.
More precisely, for a program $\qprg$ of the form (\ref{eq:aspqwprogram}), $\qprg^{\geq i}$ denotes the \textit{i-th suffix program} $\Box_i \prg_i \ldots \Box_n \prg_n : C$, with $1\leq i \leq n$. (i.e., the one obtained from $\qprg$ removing the first $i-1$ quantifiers and subprograms). 
Moreover, procedure $removeGlobal(\qprg)$ builds an \ASPQ program from a plain one in input (roughly, it removes the global constraint program $\wprg$).
Given two programs $\qprg_1$ and $\qprg_2$, $replace(\qprg_1,i,\qprg_2)$ returns the \ASPQW program obtained from $\qprg_1$ by replacing program $\qprg_1^{\geq i}$ by $\qprg_2$, 
for example $replace(\exists P_1 \forall P2 \exists P_3 : C, 2, \exists P_4 :C)$ returns $\exists P_1 \exists P_4 :C$. With a little abuse of notation, we write that a program is of type $T \in [1,5]$ if it satisfies the conditions for applying the rewriting $col_{T}$ defined above (cfr., Lemmas~\ref{lemma:uni-plain}-\ref{lemma:rewrite_fw}). For example, when type $T=1$ we check that the first two subprograms of $\qprg$ are plain and uniform so that $col_{1}$ can be applied to program $\qprg$.

In order to obtain a quantifier alternating \ASPQ program from the input $\qprg$, Algorithm~\ref{alg:cancel_weak} generates a sequence of programs by applying at each step one of the $col_{T}$ transformations. In particular, at each iteration $s$, the innermost suffix program, say $\qprg_s^{\geq i}$, that is of current type $T$ is identified. Then the next program $\Pi_{s+1}$ is built by replacing $\qprg_s^{>i}$ by $col_{T}(\qprg_s^{>i})$. Algorithm terminates when no transformation can be applied, and returns the program $removeGlobal(\qprg_s)$.

\thmConvergence
\begin{proof}[Proof. (Sketch)]
Algorithm~\ref{alg:cancel_weak} repeatedly simplifies the input by applying $col_T(\cdot)$ procedures ($T\in[1,5])$ until none can be applied. So, the results follow from the Lemmas~\ref{lemma:uni-plain}-\ref{lemma:rewrite_fw} that ensures the input can be converted to an equi-coherent plain \ASPQW program. Note that, unless the innermost subprogram of $\qprg$ is not plain, no additional quantifier is added during the execution of Algorithm~\ref{alg:cancel_weak} (if anything, some may be removed), so $nQuant(\qprg') \leq nQuant(\qprg) + 1$.
\end{proof}

\begin{proof}
At each step $s$, Algorithm~\ref{alg:cancel_weak} searches for the innermost suffix subprogram  $\qprg_s^{\geq i}$ such that either $(i)$ $\qprg_s^{\geq i}$ begins with two consecutive quantifiers of the same type (i.e., it is of type 1,2 or 3), or $(ii)$ $\qprg_s^{\geq i}$ begins with a not plain subprogram followed by a quantifier alternating sequence of plain subprograms (i.e., it is of type 4 or 5). 
In case $(i)$, one of the subprocedures $col_1, col_2$, or $col_3$ is applied, which results in the computation of program $\Pi_{s+1}$ having one less pair of uniform subprograms (i.e., $nQuant(\Pi_{s+1}) = nQuant(\Pi_{s})-1$).
In case $(ii)$, one of the subprocedures $col_4, col_5$ is applied, which results in the computation of program $\Pi_{s+1}$ such that its $i$-th subprogram is plain. After applying 
$col_4, col_5$ we have that $nQuant(\Pi_{s+1}) \leq nQuant(\Pi_{s})+1$, indeed if $i=nQuant(\Pi_s)$ one more quantifier subprogram is added. So the algorithm continues until neither condition $(i)$ nor $(ii)$ holds. This happens when $\Pi_{s}$ is a plain quantifier alternating program. 
Note that, unless the innermost subprogram of $\qprg$ is not plain, no additional quantifier is added during the execution of Algorithm~\ref{alg:cancel_weak} (if anything, some may be removed), so $nQuant(\qprg') \leq nQuant(\qprg) + 1$.
\end{proof}

\section{Complexity issues}\label{sec:complexity}
In this section, we recall the complexity results related to verifying the coherence of \ASPQW programs and provide full proof for completeness results.  

\thmUpperBound
\begin{proof}
Let $\qprg'$ be the result of applying Algorithm~\ref{alg:cancel_weak} to $\qprg$. Then,
$\qprg'$ is a quantifier-alternating plain program with at most $n=nQuant(\qprg)+1$ quantifiers that is coherent iff $\qprg$ is coherent (Theorem~\ref{thm:convergence}). 
Thesis follows { from Theorem 3 in the paper by}~\cite{DBLP:journals/tplp/AmendolaRT19}.
\end{proof}

\thmLowerBound
\begin{proof}
The proof trivially follows from the observation that any quantifier-alternating \ASPQ program with $n$ quantifiers is a plain \ASPQW program where $\wprg=\emptyset$. 
\end{proof}

\corLastPlainSigmaN
\begin{proof}
The statement follows from Theorem 3 of~\cite{DBLP:journals/tplp/AmendolaRT19}.

\noindent(Membership) Let $\qprg$ be an \ASPQW program where the last subprogram is plain.
We observe that by applying Algorithm~\ref{alg:cancel_weak} we obtain a quantifier-alternating plain \ASPQ program $\qprg'$ with at most $n+1$ quantifiers such that $\qprg'$ is coherent if and only if $\qprg$ is coherent.
Since the only case in which an extra quantifier is added is when the last subprogram is not plain (i.e. $\weak{\prg_n}\neq\emptyset$) then $nQuant(\qprg')\leq n$ then the membership follows.

\noindent(Hardness) The hardness trivially follows by observing that any quantifier-alternating \ASPQ program with $n$ quantifiers is trivially encoded as a plain \ASPQW program where $\wprg=\emptyset$.
\end{proof}

Note that, in plain \ASPQ (as well as in related formalisms~\citep{DBLP:journals/tcs/Stockmeyer76,DBLP:journals/tplp/FandinnoLRSS21}), the complexity of coherence correlates directly with the number of quantifier alternations~\citep{DBLP:journals/tplp/AmendolaRT19}.
%
Perhaps somewhat unexpectedly at first glance, it is not the case of \ASPQW, where one can ``go up one level'' with two consecutive quantifiers of the same kind. This observation is exemplified below.

\thmEEwSigmaTwo
\begin{proof}[Proof. (Sketch)]
Membership follows from Theorem~\ref{thm:general_complexity}. %
\noindent Hardness is proved by a reduction of an existential 2QBF in DNF by modifying the QBF encoding in \ASPQ presented in Theorem 2 of \cite{DBLP:journals/tplp/AmendolaRT19}. In particular, a weak constraint in $P_2$ simulates the forall quantifier by preferring counterexamples that are later excluded by the final constraint $C$.    
\end{proof}
\begin{proof}
    \noindent(Hardness) Let us consider a QBF formula $\Phi = \exists X_1 \forall X_2 \phi$, where $X_1,X_2$ are two disjoint sets of propositional variables, and $\phi$ is a 3-DNF formula over variables in $X_1,X_2$ of the form $D_1 \vee \ldots \vee D_n$, where each conjunct $D_i = l_1^i\wedge l_2^i\wedge l_3^i$, with $1\leq i\leq n$. It is known that the task of verifying the satisfiability of $\Phi$ is a $\Sigma_2^P$-complete problem~\citep{DBLP:journals/tcs/Stockmeyer76}, thus, we reduce $\Phi$ to an \ASPQW program $\qprg$ of the form $\exists P_1 \exists P_2: C$ where

{
\vspace{.3cm}
\footnotesize
\begin{minipage}{.25\textwidth}
\centering
    \[
    \prg_1 = \left\{
    \begin{array}{lr}
        \{x\}\leftarrow & \forall x \in X_1 \\
    \end{array}
    \right\}
    \]
\end{minipage}
\hfill
\begin{minipage}{.35\textwidth}
\centering
    \[
    \prg_2 = \left\{
    \begin{array}{lr}
         \{x\}\leftarrow&  \forall x \in X_2\\
         sat \leftarrow l_1^i,l_2^i,l_3^i& \forall D_i \in \phi\\
         \weakarr sat [1@1]
    \end{array}
    \right\}
    \]
\end{minipage}
\hfill
\begin{minipage}{.2\textwidth}
\centering
    \[
    C=\left\{
    \leftarrow \naf sat
    \right\}
    \]
\end{minipage}
\vspace{.3cm}
}

    We recall that 2-QBF formula $\Phi$ of the form $\exists X_1 \forall X_2\ \phi$ where is satisfiable if and only if there exists a truth assignment $\tau_1$ for variables in $X_1$ such that for every truth assignment $\tau_2$ of variables in $ X_2$, the formula $\phi$ is satisfied w.r.t. $\tau_1$ and $\tau_2$ (i.e. at least a conjunct in $\phi$ is true w.r.t. $\tau_1$ and $\tau_2$).
    
    To this end, the program $P_1$ encodes the truth assignments of variables in $X_1$ by means of a choice rule for each $x \in X_1$.
    Analogously, $P_2$ encodes the truth assignments of variables in $X_2$, by means of a choice rule for each $x \in X_2$, and checks whether $\phi$ is satisfied or not by means of a rule for each conjunct $D_i$, that derives the atom $sat$ whenever $D_i$ is true.

    Thus, if $\Phi$ is satisfiable then there exists $M_1 \in \AS{P_1} = \OptAS{P_1}$, such that $M_1$ encodes $\tau_1$, and for every answer set $M_2 \in \AS{P_2'}$, $sat\in M_2$, where $P_2' = P_2\cup\fix{P_1}{M_1}$.
    
    Since $sat$ appears in every $M_2 \in \AS{P_2'}$ then for every $M_2 \in \AS{P_2'}$, $\levelCost{P_2'}{M_2}{1}=1$, and so, $\AS{P_2'} = \OptAS{P_2'}$.
    Thus, for every $M_2 \in \OptAS{P_2'}$, $C \cup \fix{P_2'}{M_2}$ is coherent and so, $M_1 \in QAS(\qprg)$.

    Conversely, if $\Phi$ is unsatisfiable then for every truth assignment $\tau_1$ over variables $X_1$, there exists a truth assignment $\tau_2$ over variables $X_2$ such that $\phi$ is unsatisfiable w.r.t. $\tau_1$ and $\tau_2$. 
    This means that each conjunct of $\phi$ is false w.r.t. $\tau_1$ and $\tau_2$.
    Thus, for every $M_1 \in \AS{P_1}$ there exists $M_2 \in \AS{P_2'}$ such that $sat \notin M_2$, with $P_2' = P_2 \cup \fix{P_1}{M_1}$.
    This means that $\levelCost{P_2'}{M_2}{1}=0$, and so, $\OptAS{P_2'} = \{M_2 \in \AS{P_2'}\mid sat \notin M_2\}$. 
    Thus, there exists $M_2 \in \OptAS{P_2'}$, $C\cup\fix{P_2'}{M_2}$ is incoherent and so $\qprg$ is incoherent.

    At this point, it is easy to see that we can trivially add in the program $\prg_1$ a weak constraint of the form $\weakarr a\ [1@1]$ where $a$ is a fresh atom not appearing anywhere else without affecting the optimal answer set of $\prg_1$.  
    Thus, this hardness holds both if $\prg_1$ is plain or not.

    \noindent(Membership)
    Let $\qprg$ be an \ASPQW program of the form $\exists \prg_1 \exists\prg_2:\cprg$ where $\prg_2$ is not plain.
    By applying Algorithm~1 on $\qprg$, no matter if $\prg_1$ contains weak constraints or not, we obtain an existential \ASPQ program $\qprg'$, made of two alternating quantifiers, such that $\qprg$ is coherent if and only if $\qprg'$ is coherent. Since verifying the coherence of $\qprg'$ is in $\Sigma_2^P$~\citep{DBLP:journals/tplp/AmendolaRT19}-complete then verifying the coherence of $\qprg$ is also in $\Sigma_2^P$.
\end{proof}
The proof provides insights into this phenomenon. Indeed, the second quantifier, the one over optimal answer sets, ``hides'' a universal quantifier. 
%

\begin{thm}\label{thm:ew_c}
    Deciding coherence of uniform existential \ASPQW programs with at most two quantifiers (i.e. $n\leq2$) such that only $\prg_1$ contains weak constraints (i.e. $\weak{\prg_1}\neq\emptyset$ and  (if $n=2$) $\weak{\prg_2}=\emptyset$) is $\Delta_2^P$-complete.
\end{thm}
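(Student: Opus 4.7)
The plan is to establish membership by exploiting the fact that, once an optimal answer set of $\prg_1$ is fixed, the rest of the coherence check collapses into an NP problem, so the whole task reduces to an optimization-plus-brave-reasoning procedure over $\prg_1$; and to establish hardness by a direct reduction from the known $\Delta_2^P$-complete task of deciding atom membership in some optimal answer set of a plain ASP program with weak constraints~\citep{DBLP:journals/tkde/BuccafurriLR00}.

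\textbf{Membership.} Unwinding the definition of coherence, the input $\qprg$ is coherent iff there exists $M_1\in\OptAS{\prg_1}$ such that either (for $n=1$) $\cprg\cup \fix{\prg_1}{M_1}$ admits an answer set, or (for $n=2$) there exists $M_2\in\OptAS{\prg_2\cup \fix{\prg_1}{M_1}}$ with $\cprg\cup\fix{\prg_2\cup \fix{\prg_1}{M_1}}{M_2}$ coherent. Because $\prg_2$ is plain we have $\OptAS{\prg_2\cup \fix{\prg_1}{M_1}}=\AS{\prg_2\cup \fix{\prg_1}{M_1}}$, and because $\cprg$ is stratified its coherence under any fixed interpretation is decidable in polynomial time. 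Hence in both cases, once $M_1$ is fixed, the residual question ``does a good $M_2$ exist?'' is in NP. To decide coherence of $\qprg$ in $\Delta_2^p$, I would follow the standard optimization scheme of~\cite{DBLP:journals/tkde/BuccafurriLR00}: compute the optimal cost vector of $\prg_1$ level by level by binary search using a $\Sigma_1^P=\text{NP}$ oracle (polynomially many queries, since there are polynomially many levels and each coordinate is polynomial in the input size when weights are read in binary), and then issue a final NP query that asks for the existence of an answer set $M_1$ of $\prg_1$ matching this optimal cost vector together with a witness $M_2$ (if $n=2$) making the residual constraint program coherent. This yields a $\Delta_2^p$ algorithm.

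\textbf{Hardness.} I would reduce from the $\Delta_2^p$-complete problem of deciding, given a plain ASP program $P$ with weak constraints and an atom $a$, whether $a$ belongs to some optimal answer set of $P$~\citep{DBLP:journals/tkde/BuccafurriLR00}. Given $(P,a)$, construct the \ASPQW program
$\qprg = \exists P : \{\leftarrow \naf a\}.$
This instance satisfies the hypotheses: it is existential with $n=1$, its single subprogram $P$ contains weak constraints, and $\cprg=\{\leftarrow\naf a\}$ is stratified. By the definition of coherence, $\qprg$ is coherent iff there exists $M\in\OptAS{P}$ such that $\{\leftarrow\naf a\}\cup\fix{P}{M}$ admits an answer set, i.e.\ iff $a\in M$ for some $M\in\OptAS{P}$. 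The same construction trivially extends to the $n=2$ case by appending a plain dummy existential subprogram (e.g.\ a single fact), so hardness holds in both cases of the statement.

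\textbf{Main obstacle.} The only delicate point is the membership argument in the $n=2$ case: one must be careful that the multi-level optimization over $\prg_1$ is correctly driven by NP oracle queries whose length stays polynomial in the input, since a naive formulation might mix up the optimization on $\prg_1$ with the existential guess over $\prg_2$. This is resolved by separating the two phases, first pinning down the optimal cost vector of $\prg_1$ in isolation, and only then issuing a single combined NP query that incorporates both $\prg_1$'s cost bound and the guess of $M_2$.
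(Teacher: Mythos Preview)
Your hardness argument is essentially identical to the paper's: both reduce from the $\Delta_2^P$-complete problem of deciding whether an atom occurs in some optimal answer set of a program with weak constraints, via the one-line construction $\qprg=\exists P:\{\leftarrow\naf a\}$.

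Your membership argument is correct but differs from the paper's route. You argue directly at the algorithmic level: first determine the optimal cost vector of $\prg_1$ in isolation by the standard level-by-level binary search with an NP oracle, and then issue one final NP query that simultaneously guesses an $M_1$ achieving that cost and (when $n=2$) an $M_2\in\AS{\prg_2\cup\fix{\prg_1}{M_1}}$ making $\cprg$ coherent. The paper instead first collapses the $n=2$ case to $n=1$ via Lemma~\ref{lemma:collapse_ew_e} ($col_2$), and then, for $n=1$, builds a single ASP program $\prg^*$ by shifting all levels of $\weak{\prg_1}$ upward, turning the constraints of $\cprg$ into rules deriving a fresh atom $unsat$, and adding $\weakarr unsat\ [1@1]$; coherence of $\qprg$ then becomes ``$\prg^*$ has an optimal answer set with cost $0$ at level $1$'', a task already known to be in $\Delta_2^P$. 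Your approach is more elementary and self-contained, handling both $n=1$ and $n=2$ uniformly without invoking $col_2$; the paper's approach has the advantage of packaging everything into a single ASP instance and appealing to an off-the-shelf complexity result. A minor wording issue: when you say ``each coordinate is polynomial in the input size'' you presumably mean its bit-length is polynomial (the value itself may be exponential), which is what is needed for the binary search to terminate in polynomially many oracle calls.
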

\begin{proof}
    From Lemma~\ref{lemma:collapse_ew_e} we observe that each program $\qprg = \exists \prg_1 \exists \prg_2:C$, where $\prg_1$ is not plain and $\prg_2$ is plain, can be transformed into an \ASPQW program $\qprg'=col_2(\qprg)$ such that $\qprg$ is coherent if and only if $\qprg'$ is coherent. Thus it is sufficient to prove that the statement holds for $n=1$.


    \noindent(Hardness) Given a program $\prg$, it is known that the task of verifying that an atom $a\in \HB$ appears in some optimal answer sets is $\Delta_2^P$-complete, for a normal program with weak constraints~\citep{DBLP:journals/tkde/BuccafurriLR00}. 
    Given a normal program $\prg$ with weak constraints and an atom $a \in \HB$, we can construct an \ASPQW\ program $\qprg$ of the form $\exists \prg: \{\leftarrow \naf a\}$. According to the semantics of \ASPQW, $\qprg$ is coherent if there exists $M\in\OptAS{\prg}$ such that $C \cup \fix{\prg}{M}$ is coherent. 
    By construction, $C \cup \fix{\prg}{M}$ if and only if $a \in M$. Thus, $M \in QAS(\qprg)$ (i.e. $\qprg$ is coherent) if and only if $M \in \OptAS{\prg}$ and $a \in M$.
    
    
    \noindent(Membership) Given an \ASPQW program $\qprg$ of the form $\exists \prg: C$, we can construct an ASP program $\prg^{*}$ such that $\qprg$ is coherent if and only if $\prg^{*}$ admits an optimal answer set $M$ such that $\levelCost{\prg^{*}}{M}{1} = 0$. It is known that the task of verifying the existence of an optimal answer set whose cost is $c$ at level $l$ is $\Delta_2^P$-complete~\citep{DBLP:journals/corr/abs-2401-02153}. 

    Let $\qprg = \exists \prg: C$, we construct (1) the program $\prg'$ obtained by uniformly increasing the level of weak constraints in $\prg$ in such a way that the lowest level is 2; (2) the program $C'$ by translating each strong constraint $r \in C$ is into a normal rule $unsat \leftarrow B_r$, where $unsat$ is a fresh atom not appearing anywhere else.
     
    Let $\prg^* = \prg' \cup C' \cup \{\weakarr unsat [1@1]\}$, from the stratified definition assumption we know that $\heads{C'} \cap at(\prg') = \emptyset$, and so, $U = at(\prg)$ is a splitting set for $\prg^*$ and so each $M \in \AS{\prg^*}$ is of the form $M_1 \cup M_2$ where $M_1 \in \AS{bot_U(\prg^*)}$ and $M_2 \in \AS{e_U(\prg^*\setminus bot_U(\prg^*),M_1)}$. In particular, $bot_U(\prg^*) = \prg'$ and $\prg^*\setminus bot_U(\prg^*) = C'$, and so $M_1 \in \AS{\prg'}$ and $M_2 \in \AS{e_U(C',M_1)}$.
    Thus, from Lemma~\ref{lemma:fix_to_split}, $M \in \AS{C' \cup \fix{\prg'}{M_1}}$.
    
    By construction, weak constraints in $\prg'$ have the highest levels and so, $M$ is an optimal answer set of $\prg^{*}$ if and only if $M_1 \in \OptAS{\prg'}$, and $M \in \AS{C' \cup \fix{\prg'}{M_1}}$.

    Moreover, since $C$ is a stratified program with strong constraints then the incoherence of $C$ can only be caused by strong constraint violations, that are encoded as normal rules in $C'$ defining the fresh atom $unsat$.  
    Thus, $C' \cup \fix{\prg'}{M_1}$ admits always a unique answer $M$. In particular, if $unsat \in M$ then some strong constraints in $C$ are violated and so $C \cup \fix{\prg}{M_1}$ is incoherent, otherwise, no strong constraints in $C$ are violated and so, $C \cup \fix{\prg}{M_1}$ is coherent.

    By construction, $\prg^*$ contains the weak constraint $\weakarr unsat [1@1]$, and so, if there exists $M \in \OptAS{\prg^*}$ such that $unsat \in M$ then $\levelCost{\prg^*}{M}{1} = 1$ and does not exist $M' \in \OptAS{\prg^*}$ such that $unsat \notin M$ with $\levelCost{\prg^*}{M'}{1} = 0$.

    Thus, if there exists $M \in \OptAS{\prg^*}$ such that $unsat \notin M$ then $\prg^*$ admits an optimal answer set that costs $0$ at level 1. Accordingly, since $unsat \notin M$ then every constraint in $C$ is satisfied, and so $\qprg$ is coherent.
    Conversely, if there exists $M \in \OptAS{\prg^*}$ such that $unsat \in M$ then each $M'\in \OptAS{\prg^*}$ contains $unsat$ then $\prg^*$ does not admit an optimal answer set that costs $0$ at level 1.
    Accordingly, since $unsat$ appears in every $M'$ then at least one constraint in $C$ is violated, and so $\qprg$ is incoherent.
\end{proof}

\propEEOrFFisNPorCONP
The result follows trivially from Lemma~\ref{lemma:uni-plain}, once we observe that one application of $col_1$ builds an equi-coherent program with one quantifier. 
%
%

Finally, the suitability of \ASPQW for modeling optimization problems is witnessed by the following.

\begin{lemma}[\cite{DBLP:journals/tcs/Krentel92}]\label{lemma:delta_complete}

    Let $X_1,\ldots,X_n$ be disjoint sets of propositional variables and $\phi$ be a propositional formulas over $X_1,\ldots,X_n$.
    Given a pair of truth assignments $\tau_1,\tau_2$ over a set of variables $X={x_1,\ldots,x_m}$, we say that $\tau_1$ is lexicographically greater that $\tau_2$ if $\tau_1(x_i) = \top$ and $\tau_2(x_i)=\bot$ with $1\leq i\leq m$ being the smallest index for which $\tau_1(x_i)\neq \tau_2(x_i)$.
    Let $\Phi$ be a QBF formula of the form $\forall X_2 \exists X_3 \ldots \mathcal{Q} X_n\ \phi$, where each $Q\in\{\exists,\forall\}$, and $\phi$ is a formula in 3-DNF if $n$ is even, otherwise it is in 3-CNF, and $X_1 = \{x_1,\ldots,x_m\}$.
    Deciding whether the lexicographically minimum truth assignment $\tau$ of variables in $X_1$, such that $\forall X_2 \exists X_3 \ldots \mathcal{Q} X_n\ \phi_{\tau}$ is satisfied (assuming such $\tau$ exists), satisfies the condition $\tau(x_m)=\top$ is a $\Delta_{n+1}^P$-complete problem.
\end{lemma}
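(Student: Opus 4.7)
The plan is to prove both membership and hardness directly, keeping in mind that the QBF body $\forall X_2 \exists X_3 \cdots \mathcal{Q} X_n\ \phi$ has $n-1$ alternating quantifiers and, when prefixed by the implicit existential choice of $X_1$, becomes a $\Sigma_n^P$ statement.

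\emph{Membership in $\Delta_{n+1}^P$.} I would find the lex-min $\tau$ by a greedy bit-by-bit construction over $X_1=\{x_1,\ldots,x_m\}$. At step $i$, having fixed $\tau(x_1),\ldots,\tau(x_{i-1})$, I query a $\Sigma_n^P$ oracle asking whether there exists an extension of the current partial assignment with $x_i=\bot$ that makes $\forall X_2 \exists X_3 \cdots \mathcal{Q} X_n\ \phi$ true. If yes, set $\tau(x_i)=\bot$; otherwise, set $\tau(x_i)=\top$. This lex-min selection rule is correct by definition of the ordering. After $m$ polynomial-time oracle calls I output $\tau(x_m)$, giving the whole procedure in $P^{\Sigma_n^P}=\Delta_{n+1}^P$.

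\emph{Hardness.} I reduce from an arbitrary $L\in\Delta_{n+1}^P$. Fix a deterministic polynomial-time oracle machine $M$ with $\Sigma_n^P$ oracle $A$ that decides $L$; on input $y$, $M$ makes $p=p(|y|)$ queries $q_i(y;a_1,\ldots,a_{i-1})$ and produces output $b(y;a_1,\ldots,a_p)$. I construct $\Phi_y$ with free $X_1=(z_1,\ldots,z_p,x_m)$, where $z_i$ is intended to encode the true answer $a_i$ to $q_i$ and $x_m$ encodes $b$. For each $i$, write ``$A$ accepts $q_i$'' as a $\Sigma_n^P$ formula $\sigma_i$ in prenex $\exists\forall\exists\cdots$ form, and ``$A$ rejects $q_i$'' as the dual $\Pi_n^P$ formula $\pi_i$ in prenex $\forall\exists\forall\cdots$ form. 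The matrix of $\Phi_y$ I want to enforce is
\[
\bigwedge_{i=1}^{p}\!\Big((z_i\wedge \sigma_i)\vee(\neg z_i\wedge \pi_i)\Big)\ \wedge\ \big(x_m\leftrightarrow b(z_1,\ldots,z_p)\big).
\]
To bring this into a single prenex $\forall X_2 \exists X_3 \cdots \mathcal{Q} X_n$ prefix, I introduce, for every $i\in\{1,\ldots,p\}$ and every depth $j\in\{2,\ldots,n\}$, a private block of witness variables $X_j^{(i)}$ used exclusively in the $j$-th level of $\sigma_i$ or $\pi_i$, and set $X_j=\bigcup_i X_j^{(i)}$ at the global level; the per-query gate literals $z_i,\neg z_i$ are absorbed into the matrix. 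Standard Tseitin-style flattening with fresh variables bound at the innermost level then coerces $\phi$ into 3-CNF if $n$ is odd and 3-DNF if $n$ is even, matching the required normal form.

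\emph{Correctness of the reduction.} By induction on $i$, the lex-min assignment $\tau^*$ must satisfy $\tau^*(z_i)=a_i$: assuming $\tau^*(z_1),\ldots,\tau^*(z_{i-1})$ already equal $a_1,\ldots,a_{i-1}$, the $i$-th conjunct requires $z_i=a_i$, because $\sigma_i$ holds iff $a_i=1$ and $\pi_i$ holds iff $a_i=0$. The final conjunct then forces $\tau^*(x_m)=b(a_1,\ldots,a_p)=M^A(y)$, so $\tau^*(x_m)=\top$ iff $y\in L$. The main obstacle I expect is the prenexing and normalization: orchestrating the $p$ independent $\Sigma_n^P/\Pi_n^P$ witness structures into a single alternation $\forall X_2 \exists X_3 \cdots \mathcal{Q} X_n$ of depth exactly $n-1$ starting with $\forall$, while keeping the matrix in the prescribed 3-CNF or 3-DNF form—this block-merging plus gated-matrix construction is the technical core of Krentel's argument and is where the parity-of-$n$ case split really bites.
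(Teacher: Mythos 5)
First, note that the paper does not prove this lemma at all: it is imported verbatim from Krentel (1992) as a known result, so there is no in-paper proof to match your argument against. Evaluating your proposal on its own merits: the membership half is correct and standard (the greedy bit-fixing loop makes $m$ calls to a $\Sigma_n^P$ oracle, since ``$\exists$ extension of the partial assignment such that $\forall X_2\exists X_3\cdots\mathcal{Q}X_n\,\phi$ holds'' is an $\exists$ block followed by the $n-1$ blocks $X_2,\ldots,X_n$, hence $\Sigma_n^P$).

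The hardness half has a genuine gap: the matrix $\bigwedge_i\bigl((z_i\wedge\sigma_i)\vee(\neg z_i\wedge\pi_i)\bigr)$ cannot be prenexed into the required form. After the leading block $X_1$ there are only $n-1$ quantifier blocks $\forall X_2\exists X_3\cdots\mathcal{Q}X_n$ available, so the suffix can only host a $\Pi_{n-1}^P$ predicate of the free variables; but $\pi_i$ is a $\Pi_n^P$ formula ($n$ blocks starting with $\forall$), so its innermost block has no home, and under your allocation (all witness blocks placed in $X_2,\ldots,X_n$) even $\sigma_i$'s leading existential block does not fit. This is not a repairable bookkeeping issue: your matrix forces $z_i=a_i$ in \emph{every} satisfying assignment, so if it were expressible with a $\Pi_{n-1}^P$ suffix $S(z,x_m)$, then both ``$a_1=1$'' and ``$a_1=0$'' would be $\Sigma_n^P$ predicates of the input (guess $z,x_m$ with $z_1$ fixed and check $S$), putting the $\Sigma_n^P$-complete oracle language in $\Sigma_n^P\cap\Pi_n^P$ and collapsing PH. Krentel's actual construction avoids this by certifying only the ``yes'' answers: the existential witnesses of each $\sigma_i$ are folded into the leading block $X_1$ at lower lexicographic priority than the answer bits, the matrix enforces only the one-directional implication (answer bit set to its ``yes'' value $\Rightarrow$ witness valid), and the ``no'' answers are enforced by lexicographic optimality together with the priority order $z_1\succ z_2\succ\cdots$ (an inductive argument shows the optimum cannot profitably flip an answer bit, because all later bits are dominated by an earlier one). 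No $\Pi_n^P$ certificate is ever written down, which is precisely what keeps the alternation depth at $n$. You would need to replace your gated matrix with this one-sided, optimality-driven gadget for the reduction to go through.
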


\thmWeightAndLevelsIsDelta
%
\begin{proof}
(Hardness) 
Starting from the $\Delta_{n+1}^P$-complete problem introduced by Lemma~\ref{lemma:delta_complete}, we can construct a plain alternating \ASPQW program with $n$ quantifier $\qprg$ such that an atom, namely $x_m$, appears in some optimal quantified answer set of $\qprg$ if and only if the answer to the problem is ``yes".

Without loss of generality we assume that $n$ is even, $\phi$ is propositional formula in 3-DNF, and $X_1 = \{x_1,\ldots,x_m\}$.

For simplicity, we introduce some set of rules that will be used in the construction of $\qprg$.
More precisely, $sat(\phi)$ denotes the set of rules of the form $sat_{\phi} \leftarrow l_1^i, l_2^i, l_3^i$, where $D_i = l_1^i\wedge l_2^i\wedge l_3^i$ is a conjunct in $\phi$; whereas for a set of variables $Z = \{z_1,\ldots,z_k\}$, $choice(Z)$ denotes the program made of a single choice rule of the form $\{z_1;\ldots;z_k\}\leftarrow$.

We are now ready to construct the program $\qprg$ of the form $\exists\prg_1\forall\prg_2\ldots\forall \prg_{n}: C: \wprg$, where $\prg_i = choice(X_i)$, for each $1\leq i\leq n$, and the programs $\cprg$ and $\wprg$ are of the form:

\begin{minipage}{.49\textwidth}
\footnotesize
\smallskip
\[
    \cprg = \left\{
    \begin{array}{rllr}
        &           & sat(\phi) &\\
        & \leftarrow& \naf sat_{\phi}  & \\
    \end{array}
    \right\}
\]    
\smallskip
\end{minipage}
\hfill
\begin{minipage}{.49\textwidth}
\footnotesize
\[
    \wprg = \left\{
    \begin{array}{lr}
        \weakarr x_i\ [1@m-i+1] &\forall 1\leq i \leq m \\
    \end{array}
    \right\}
\]    
\end{minipage}

Intuitively, the program $\prg_1$ is used to guess a possible assignment $\tau$ over variables in $X_1$, for which we want to verify the satisfiability of the QBF formula $\Phi:\forall X_2 \exists X_3 \ldots \forall X_n\ \phi_{\tau}$. 
The following subprograms $\prg_i$, with $2\leq i\leq n$, precisely match the quantifier alternation of $\Phi$ and are used for guessing possible truth assignments for variables in $X_i$.  
Once the final constraint program $\cprg$ is reached, we can evaluate the 3-DNF formula $\phi$ according to the truth assignments guessed by previous subprograms. 
The rules in $sat(\phi)$ will derive the atom $sat$ if there is at least one conjunct in $\phi$ that is satisfied. Finally the last constraint in $C$ impose that at least one conjunct must be satisfied. 

Thus, there exists a quantified answer set of $\qprg$ if and only if there exists a assignment of variables in $X_1$ such that $\Phi$ is satisfiable.
Since the program $\wprg$ contains the set of weak constraints of the form $\weakarr x_i\ [1@m-i+1]$ for each $i \in [1,\ldots,m]$ then the cost of each quantified answer set is given by the true atoms in the guessed $\tau$.
Thus, by assigning the highest priority to the atom $x_1$ (i.e. $m-1+1=m$) and the lowest priority to $x_m$ (i.e. $m-m+1=1$) we can simulate the lexicographical order described above.
In conclusion, the optimal quantified answer set $\Pi$ corresponds to the lexicographically minimum truth assignment $\tau$, such that $\Phi$ is coherent.
By construction $x_m$ is derived if and only if $\tau(x_m)$ is true, and so the thesis follows. 

(Membership) 
According to Theorem 3 of \cite{DBLP:journals/tplp/AmendolaRT19}, we know that the coherence of an existential plain alternating program with $n$ quantifiers falls within the complexity class $\Sigma_{n}^P$-complete. 
By following similar observations employed in the proofs by \cite{DBLP:journals/tkde/BuccafurriLR00,DBLP:journals/ai/SimonsNS02} an optimal solution can be obtained, by implementing a binary search on the value of $k$, with a logarithmic number of calls to an oracle in $\Sigma_{n}^P$ (checking that no better solution than current exists). A final call to the oracle can ensure the existence of an optimal solution containing $a$. 
Since $k$ can be exponential w.r.t. the input size~(\cite{DBLP:journals/tkde/BuccafurriLR00,DBLP:journals/ai/SimonsNS02}) the thesis follows.
\end{proof}
\thmNoLevelCostantWeightIsTheta
%
{\begin{proof}
(Hardness) Let a QBF formula $\Phi$ be an expression of the form $\mathcal{Q}_1 X_1 \ldots \mathcal{Q}_n X_n \phi$, where $X_1,\ldots,X_n$ are disjoint sets of propositional variables, $\mathcal{Q}_i \in \{\exists,\forall\}$ for all $1\leq i\leq n$, $\mathcal{Q}_i \neq \mathcal{Q}_{i+1}$ for all $1\leq i < n$, and $\phi$ is a 3-DNF formula over variables in $X_1,X_2,\ldots,X_n$ of the form $D_1 \vee \ldots \vee D_n$, where each conjunct $D_i = l_1^i\wedge l_2^i\wedge l_3^i$, with $1\leq i\leq n$. 
A $k$-existential QBF formula $\Phi$ is a QBF formula where $n = k$ and $\mathcal{Q}_1 = \exists$.

Given a sequence of { $m$} $k$-existential QBF formulas { $\Phi_1,\ldots,\Phi_m$,} with $k$ being even and greater than or equal to $2$, and such that if $\Phi_j$ is unsatisfiable then also $\Phi_{j+1}$ is unsatisfiable, where $1\leq j < m$, deciding whether $v(\Phi_1,\ldots,\Phi_m) = max\{ j \mid 1\leq j \leq m \wedge \Phi_j\ \textit{is satisfiable}\}$ is odd is $\Theta_{k+1}$-complete~\citep{DBLP:journals/tkde/BuccafurriLR00}.

The above problem can be encoded into an \ASPQW program $\qprg$ such that a literal, namely $odd$, appears in some optimal quantified answer set of $\qprg$ if and only if $v(\Phi_1,\ldots,\Phi_m)$ is odd.
For simplicity, we introduce notation { for some sets of rules} that will be used in the construction of $\qprg$.
More precisely, given a QBF formula $\Phi$, $sat(\Phi)$ denotes the set of rules of the form $sat_{\Phi} \leftarrow l_1^i, l_2^i, l_3^i$, where $D_i = l_1^i\wedge l_2^i\wedge l_3^i$ is a conjunct in $\phi$; whereas for a set of variables $X_i = \{x_1^i,\ldots,x_n^i\}$ in $\Phi$, and an atom $a$, $choice(X_i,a)$ denotes the choice rule $\{x_1^i;\ldots;x_n^i\}\leftarrow a$.
We are now ready to construct the program $\qprg$.

First of all, we observe that all the formulas $\Phi_1,\ldots,\Phi_m$ have the same alternation of quantifiers. Thus, there is a one-to-one correspondence between the quantifiers in the QBF formulas and those in $\qprg$.
Let $\qprg$ be of the form $\Box_1\prg_1\Box_2\prg_2\ldots\Box_k \prg_k: C: \wprg$ where $\Box_i = \exists$ if $\mathcal{Q}_i = \exists$ in a formula $\Phi_j$, otherwise $\Box_i=\forall${ . The program} $\prg_1$ is of the form

{
\footnotesize
\[
    \prg_1 = \left\{
    \begin{array}{rllr}
        \{solve(1);\ldots;solve(m)\}=1 & \leftarrow& & \\
        unsolved(i)& \leftarrow & solve(j) & \forall\ j,i \in [1,\ldots,m] s.t.\ i>j\\
        odd & \leftarrow & solve(j) & \forall j \in [1,\ldots,m] s.t. \textit{ j is odd}\\
        choice(X_1^j,solve(j)) & & & \forall 1\leq j \leq m\\
    \end{array}
    \right\},
\]
\medskip
}

\noindent
{ while,}
for each $2\leq i \leq k$, the program $\prg_i$ is of the form
{
\footnotesize
\[
    \prg_i = \left\{
    \begin{array}{lr}
        choice(X_i^j,solve(j)) & \forall 1\leq j \leq m\\
    \end{array}
    \right\},
\]
}

\noindent
where each $X_i^j$ denotes the set of variables appearing in the scope of the $i$-th quantifier of the $j$-th QBF formula $\Phi_j$. { Finally,} the programs $\cprg$ and $\wprg$ are of the form

\begin{minipage}{.49\textwidth}
\footnotesize
\smallskip
\[
    \cprg = \left\{
    \begin{array}{rllr}
        &           & sat(\Phi_j) & \forall 1\leq j \leq m\\
        & \leftarrow& solve(j),\ \naf sat_{\Phi_j}  & \forall 1\leq j \leq m\\
    \end{array}
    \right\}
\]    
\smallskip
\end{minipage}
\hfill
\begin{minipage}{.49\textwidth}
\footnotesize
\[
    \wprg = \left\{
    \begin{array}{lr}
        \weakarr unsolved(i)\ [1@1,i] &\forall 1\leq i \leq m \\
    \end{array}
    \right\}.
\]    
\end{minipage}

Intuitively, the first choice rule in $\prg_1$ is used to guess one QBF formula, say $\Phi_j$, among the $m$ input ones, for which we want to verify the satisfiability. The guessed formula is encoded with the unary predicate $solve${, whereas, all the following formulas $\Phi_i$, with $i>j$, are marked as unsolved by means of the unary predicate $unsolved$.}

Then, $\prg_1$ contains different rules of the form $odd \leftarrow solve(j)$ for each odd index $j$ in $[1,m]$. 
Thus the literal $odd$ is derived whenever a QBF formula $\Phi_j$ in the sequence $\Phi_1,\ldots,\Phi_m$ is selected (i.e. $solve(j)$ is true) and $j$ is odd.
The remaining part of $\prg_1$ shares the same working principle of the following subprograms $\prg_i$, with $i\geq 2$. 
More precisely, for each QBF formula $\Phi_j$ in the sequence $\Phi_1,\ldots,\Phi_m$, they contain a choice rule over the set of variables quantified by the $i$-th quantifier of $\Phi_j$.
Note that the atom $solve(j)$ in the body of these choice rules guarantees that only one gets activated, and so the activated choice rule guesses a truth assignment for the variables in the $i$-th quantifier of $\Phi_j$.
Similarly, the constraint program $C$ contains, for each QBF formula $\Phi_j$ in the sequence $\Phi_1,\ldots,\Phi_m$, $(i)$ a set of rules that derives an atom $sat_{\phi_j}$ whenever the truth assignment guessed by the previous subprograms satisfies $\phi_j$, and $(ii)$ a strong constraint imposing that is not possible that we selected the formula $\Phi_j$ (i.e. $solve(j)$ is true) and $\phi_j$ is violated (i.e. $sat_{\Phi_j}$ is false).  
Thus, there exists a quantified answer set of $\qprg$ if and only if there exists a formula $\Phi_j$ in the sequence $\Phi_1,\ldots,\Phi_m$ such that $\Phi_j$ is satisfiable.
Since the program $\wprg$ contains the set of weak constraints of the form {$\weakarr unsolved(j)\ [1@1,j]$} for each $j \in [1,\ldots,m]$, the cost of each quantified answer set is given by the index $j$ of the selected formula. 
Thus, by minimizing the number of unsolved formulas
we are maximizing the index of the satisfiable formula $\Phi_j$. Thus, an optimal quantified answer set corresponds to a witness of coherence for a formula $\Phi_j$, s.t. for each $\Phi_{j'}$, with $j'>j$, $\Phi_{j'}$ is unsatisfiable.
By construction $odd$ is derived whenever $j$ is odd and so the { hardness} follows. 

(Membership) 
According to Theorem 3 of \cite{DBLP:journals/tplp/AmendolaRT19}, we know that the coherence of an existential plain alternating program with $n$ quantifiers falls within the complexity class $\Sigma_{n}^P$-complete.
By following { an} observation employed in the proofs by \cite{DBLP:journals/tkde/BuccafurriLR00}, { the cost of} an optimal solution can be obtained by 
binary search 
{ that terminates in} a logarithmic, { in the value of the maximum cost,} number of calls to an oracle in $\Sigma_{n}^P$ {that checks whether a quantified answer set with a lower cost with respect to the current estimate of the optimum exists.} { Once the cost of an optimal solution is determined, one more call to the oracle (for an appropriately modified instance), allows one to decide} the existence of an optimal solution containing $a$. Since each weak constraint has the same weight and the same level, then we can consider as the maximum cost the number of weak constraint violations. Thus, the number of oracle calls is at most logarithmic in the size of the problem and 
the membership follows.
\end{proof}}

\end{document}